\def\x{{\bm{x}}}
\def\v{{\bm{v}}}
\def\th{{\bm{\theta}}}
\newcommand{\R}{\mathbb{R}}
\newcommand{\sign}{\operatorname{sign}}
\def\E{{\mathbb{E}}}
\newtheorem{lemma}{Lemma}
\newtheorem*{lemma*}{Lemma}
\newtheorem*{theorem*}{Theorem}
\newtheorem*{definition*}{Definition}
\newtheorem{theorem}{Theorem}
\title{What can linearized neural networks actually say about generalization?}
\author{%
  Guillermo~Ortiz-Jiménez\\
  EPFL, Lausanne, Switzerland \\
  \texttt{guillermo.ortizjimenez@epfl.ch} \\
  \And
  Seyed-Mohsen Moosavi-Dezfooli\\
  ETH Zurich, Zurich, Switzerland\\
  \texttt{seyed.moosavi@inf.ethz.ch} \\
  \AND
  Pascal~Frossard\\
  EPFL, Lausanne, Switzerland \\
  \texttt{pascal.frossard@epfl.ch} \\
}
\begin{document}

\maketitle

\begin{abstract}
 For certain infinitely-wide neural networks, the neural tangent kernel (NTK) theory fully characterizes generalization, but for the networks used in practice, the empirical NTK only provides a rough first-order approximation. Still, a growing body of work keeps leveraging this approximation to successfully analyze important deep learning phenomena and design algorithms for new applications. In our work, we provide strong empirical evidence to determine the practical validity of such approximation by conducting a systematic comparison of the behavior of different neural networks and their linear approximations on different tasks. We show that the linear approximations can indeed rank the learning complexity of certain tasks for neural networks, even when they achieve very different performances. However, in contrast to what was previously reported, we discover that neural networks do not always perform better than their kernel approximations, and reveal that the performance gap heavily depends on architecture, dataset size and training task. We discover that networks overfit to these tasks mostly due to the evolution of their kernel during training, thus, revealing a new type of implicit bias.
\end{abstract}

\section{Introduction}\label{sec:intro}
Due to their excellent practical performance, deep learning based methods are today the \emph{de facto} standard for many visual and linguistic learning tasks. Nevertheless, despite this practical success, our theoretical understanding of how, and what, can neural networks learn is still in its infancy~\cite{zhangUnderstandingDeepLearning2017}. Recently, a growing body of work has started to explore the use of linear approximations to analyze deep networks, leading to the neural tangent kernel (NTK) framework~\cite{jacotNeuralTangentKernel}.

The NTK framework is based on the observation that for certain initialization schemes, the infinite-width limit of many neural architectures can be exactly characterized using kernel tools~\cite{jacotNeuralTangentKernel}. This reduces key questions in deep learning theory to the study of linear methods and convex functional analysis, for which a rich set of theories exist~\cite{LearningKernelsSupport2002}. This intuitive approach has been proved very fertile, leading to important results in generalization and optimization of very wide networks~\cite{leeWideNeuralNetworks2020,duGradientDescentFinds,aroraCNTK2019,biettiInductiveBiasNeural2019,zouImprovedAnalysisTraining,belkinHessian2020}.

The NTK theory, however, can only fully describe certain infinitely wide neural networks, and for the narrow architectures used in practice, it only provides a first-order approximation of their training dynamics (see Fig.~\ref{fig:ntk_sketch}). Despite these limitations, the intuitiveness of the NTK, which allows to use a powerful set of theoretical tools to exploit it, has led to a rapid increase in the amount of research that successfully leverages the NTK in applications, such as predicting generalization~\cite{deshpandeLinearizedFrameworkNew2021} and training speed~\cite{zancatoPredictingTrainingTime2020b}, explaining certain inductive biases~\cite{mobahiSelfDistillationAmplifiesRegularizationa, tancikFourierFeaturesLet2020,gebhartUnifiedPathsPerspective2020,seyedICML2021} or designing new classifiers~\cite{aroraHARNESSINGPOWERINFINITELY2020, maddoxFastAdaptation2021}.
\begin{figure}[t]
    \centering
    \includegraphics[width=0.5\textwidth]{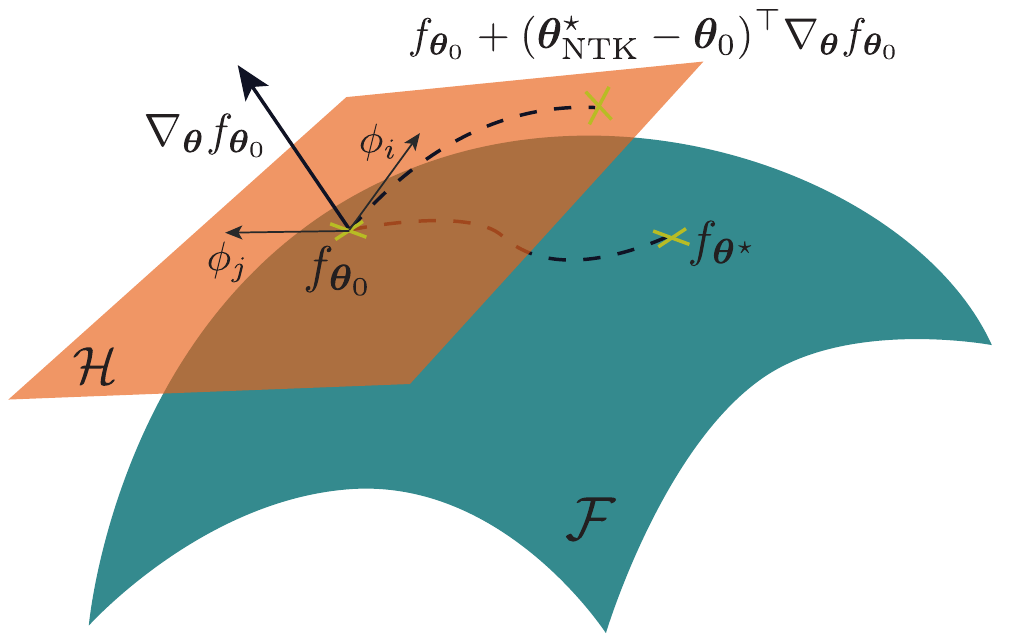}
    \caption{Conceptual illustration of the NTK approximation (see Sec~\ref{sec:preliminaries}). The empirical NTK defines a linear function space tangent $\mathcal{H}$ to the non-linear function space $\mathcal{F}$ defined by the network. In the limit of infinite width, the neural network space loses its curvature and coincides with the tangent space. Training a linearized network restricts the optimization trajectory to lie in the tangent space.}
    \label{fig:ntk_sketch}
    \vspace{-1em}
\end{figure}

Recent reports, however, have started questioning the effectiveness of this approach, as one can find multiple examples in which kernel methods are provably outperformed by neural networks~\cite{allen-zhuWhatCanResNet2019,ghorbaniWhenNeuralNetworks2020a, malachQuantifyingBenefitUsing2021}. Most importantly, it has been observed empirically that linearized models -- computed using a first-order Taylor expansion \emph{around the initialization} of a neural network -- perform much worse than the networks they approximate on standard image recognition benchmarks~\cite{fortDeepLearningKernel2020}; a phenomenon that has been coined as the \emph{non-linear advantage}. However, it has also been observed that, if one linearizes the network at a later stage of training, the non-linear advantage is greatly reduced. The reasons behind this phenomenon are poorly understood, yet they are key to explain the success of deep learning.

Building from these observations, we delve deeper into the source of the non-linear advantage, trying to understand why previous work could successfully leverage the NTK in some applications. In particular, we shed new light on the question: \emph{When can the NTK approximation be used to predict generalization, and what does it actually say about it?} We propose, for the first time, to empirically study this problem from the perspective of the characteristics of the training labels. To that end, we conduct a systematic analysis comparing the performance of different neural network architectures with their kernelized versions, on several problems with the same data support, but different labels. Doing so, we identify the alignment of the target function with the NTK as a key quantity governing important aspects of generalization in deep learning. Namely, one can rank the learning complexity of solving certain tasks with deep networks according to their kernel alignment.

We, then, study the evolution of the alignment during training to see its influence on the inductive bias. Prior work had shown that, during optimization, deep networks significantly increase their alignment with the target function, and this had been strongly conjectured to be positive for generalization~\cite{kopitkovNeuralSpectrum2020, paccolatGeometricCompressionInvariant2021a, baratinNeuralAlignment2021}. In contrast, in this work, we offer a more nuanced view of this phenomenon, and show that it does not always have a positive effect for generalization. In fact, we provide multiple concrete examples where deep networks exhibit a \emph{non-linear disadvantage} compared to their kernel approximations.

The main contributions of our work are:
\begin{itemize}
    \item We show that the alignment with the empirical NTK at initialization can provide a good measure of relative learning complexity in deep learning for a diverse set of tasks.
    \item We use this fact to shed new light on the directional inductive bias of most convolutional neural networks, as this alignment can be used to identify neural anisotropy directions~\cite{ortizjimenez2020neural}.
    \item Moreover, we identify a set of non-trivial tasks in which neural networks perform worse than their linearized approximations, and show this is due to their non-linear dynamics.
    \item We, hence, provide a fine-grained analysis of the evolution of the kernel during training, and show that the NTK rotates mostly in a single axis. This mechanism is responsible for the rapid convergence of neural networks to the training labels, but interestingly we find that it can sometimes hurt generalization, depending on the target task.
\end{itemize}
We see our empirical findings as an important step forward in our understanding of deep learning. Our work paves the way for new research avenues based on our newly observed phenomena, but it also provides a fresh perspective to understand how to use the NTK approximation in several applications.

\section{Preliminaries}\label{sec:preliminaries}

Let $f_\th:\R^d\to\R$ denote a neural network parameterized by a set of weights $\th\in\R^n$. Without loss of generality, but to reduce the computational load, in this work, we only consider the binary classification setting, where the objective is to learn an underlying target function $f:\R^d\to\{\pm 1\}$ by training the weights to minimize the empirical risk $\hat{\mathcal{R}}(f_\th)=\frac{1}{m}\sum_{i=1}^m\mathbbm{1}_{\sign(f_\th(\x_i)\neq f(\x_i)}$ over a finite set of i.i.d. training data $\mathcal{S}=\{(\x_i,f(\x_i))\}_{i=1}^m$ from an underlying distribution $\mathcal{D}$. We broadly say that a model \emph{generalizes} whenever it achieves a low expected risk $\mathcal{R}(f_\th)=\E_{\x\sim\mathcal{D}}[\mathbbm{1}_{\sign(f_\th(\x)\neq f(\x)}]$.

In a small neighborhood around the weight initialization $\th_0$, a neural network can be approximated using a first-order Taylor expansion (see Fig.~\ref{fig:ntk_sketch})
\begin{equation}
    f_\th(\x)\approx \hat{f}_\th(\x;\th_0)=f_{\th_0}(\x)+(\th-\th_0)^\top \nabla_\th f_{\th_0}(\x),\label{eq:approx_ntk}
\end{equation}
where $\nabla_\th f_{\th_0}(\x)\in\R^n$ denotes the Jacobian of the network with respect to the parameters evaluated at $\th_0$. Here, the model $\hat{f}_\th$ represents a \emph{linearized network} which maps weight vectors to functions living in a reproducible kernel Hilbert space (RKHS) $\mathcal{H}\subseteq L_2(\R^d)$, determined by the empirical NTK~\cite{leeWideNeuralNetworks2020} at $\th_0$, $\bm{\Theta}_{\th_0}(\x,\x')=\langle\nabla_\th f_{\th_0}(\x),\nabla_\th f_{\th_0}(\x')\rangle$. Unless stated otherwise, we will generally drop the dependency on $\th_0$ and use $\bm{\Theta}$ to refer to the NTK at initialization.

In most contexts, the NTK evolves during training by following the trajectory of the network Jacobian $\nabla_\th f_{\th_t}$ computed at a checkpoint $\th_t$ (see Fig.~\ref{fig:ntk_sketch}). Remarkably, however, it was recently discovered that for certain types of initialization, and in the limit of infinite network-width, the approximation in~\eqref{eq:approx_ntk} is exact, and the NTK is constant throughout training~\cite{jacotNeuralTangentKernel}. In this regime, one can, then, provide generalization guarantees for neural networks using generalization bounds from kernel methods, and show~\cite{bartlettRademacherGaussianComplexities2001}, for instance, that with high probability
\begin{equation}
    \mathcal{R}(f^\star)\leq \hat{\mathcal{R}}(f^\star)+\mathcal{O}\left(\sqrt{\cfrac{\|f\|^2_{\bm{\Theta}}\E_\x[\bm{\Theta}(\x,\x)]}{m}} \right),\label{eq:generalization_bound}
\end{equation}
where $f^\star=\arg\min_{h\in\mathcal{H}}\hat{\mathcal{R}}(h)+\mu\|h\|^2_{\bm{\Theta}}$, with $\mu>0$ denoting a regularization constant, and $\|f\|^2_{\bm{\Theta}}$ being the RKHS norm of the target function, which for positive definite kernels can be computed as
\begin{equation}
    \|f\|^2_{\bm{\Theta}}=\sum_{j=1}^\infty \,\cfrac{1}{\lambda_j}\left(\E_{\x\sim\mathcal{D}}\left[\phi_j(\x) f(\x)\right]\right)^2.\label{eq:rkhs_norm}
\end{equation}
Here, the couples $\{(\lambda_j,\phi_j)\}_{j=1}^\infty$ denote the eigenvalue-eigenfunction pairs, in order of decreasing eigenvalues, of the Mercer's decomposition of the kernel, i.e., $\bm{\Theta}(\x,\x')=\sum_{j=1}^\infty \lambda_j \phi_j(\x)\phi_j(\x')$. This means that, in kernel regimes, the difference between the empirical and the expected risk is smaller when training on target functions with a lower RKHS norm. That is, whose projection on the eigenfunctions of the kernel is mostly concentrated along its largest eigenvalues. One can then see the RKHS norm as a metric that ranks the complexity of learning different targets using the kernel $\bm{\Theta}$.

Estimating \eqref{eq:rkhs_norm} in practice is challenging as it requires access to the smallest eigenvalues of the kernel. However, one can use the following lemma to compute a more tractable bound of the RKHS norm, which shows that a high target-kernel alignment is a good proxy for a small RKHS norm.

\begin{lemma}\label{lemma:alignment}
Let $\alpha(f)=\E_{\x,\x'\sim\mathcal{D}}\left[f(\x)\bm{\Theta}(\x, \x') f(\x')\right]$ denote the alignment of the target $f\in\mathcal{H}$ with the kernel $\bm{\Theta}$. Then $\|f\|_{\bm{\Theta}}^2 \geq \|f\|^4_2/\alpha(f)$. Moreover, for the NTK, $\alpha(f)=\left\|\mathbb{E}_\x\left[f(\x)\nabla_\th f_{\th_0}(\x)\right]\right\|_2^2$.
\end{lemma}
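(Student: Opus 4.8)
The plan is to prove both claims by expanding everything in the Mercer basis $\{(\lambda_j,\phi_j)\}$ and recognizing the first inequality as an instance of Cauchy--Schwarz. Writing $c_j=\E_{\x\sim\mathcal{D}}[\phi_j(\x)f(\x)]$ for the coefficients of $f$ in this basis, the RKHS norm in \eqref{eq:rkhs_norm} reads $\|f\|_{\bm{\Theta}}^2=\sum_j c_j^2/\lambda_j$. Since $f\in\mathcal{H}$, its coefficients vanish on the null space of $\bm{\Theta}$, so all the sums below range only over indices with $\lambda_j>0$ and are finite. Because the Mercer eigenfunctions are orthonormal in $L_2(\mathcal{D})$ and $f$ lies in their closed span, I would first record that $\|f\|_2^2=\E_\x[f(\x)^2]=\sum_j c_j^2$.

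Next I would compute the alignment. Substituting $\bm{\Theta}(\x,\x')=\sum_j\lambda_j\phi_j(\x)\phi_j(\x')$ into the definition of $\alpha(f)$ and using linearity of the expectation over the independent draws $\x,\x'$ gives $\alpha(f)=\sum_j\lambda_j\,\E_\x[f(\x)\phi_j(\x)]\,\E_{\x'}[f(\x')\phi_j(\x')]=\sum_j\lambda_j c_j^2$. With the three quantities now written as $\|f\|_2^2=\sum_j c_j^2$, $\|f\|_{\bm{\Theta}}^2=\sum_j c_j^2/\lambda_j$, and $\alpha(f)=\sum_j\lambda_j c_j^2$, the claimed bound is exactly Cauchy--Schwarz applied to the sequences $a_j=c_j/\sqrt{\lambda_j}$ and $b_j=\sqrt{\lambda_j}\,c_j$: since $a_jb_j=c_j^2$, one obtains $\bigl(\sum_j c_j^2\bigr)^2\le\bigl(\sum_j c_j^2/\lambda_j\bigr)\bigl(\sum_j\lambda_j c_j^2\bigr)$, i.e.\ $\|f\|_2^4\le\|f\|_{\bm{\Theta}}^2\,\alpha(f)$, which rearranges to the stated inequality (using $\alpha(f)>0$, guaranteed by positive definiteness of $\bm{\Theta}$ for $f\neq 0$).

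For the second identity I would specialize to the NTK, where $\bm{\Theta}(\x,\x')=\langle\nabla_\th f_{\th_0}(\x),\nabla_\th f_{\th_0}(\x')\rangle$. Plugging this into $\alpha(f)$ and pulling the inner product out through the two independent expectations yields $\alpha(f)=\bigl\langle\E_\x[f(\x)\nabla_\th f_{\th_0}(\x)],\,\E_{\x'}[f(\x')\nabla_\th f_{\th_0}(\x')]\bigr\rangle=\bigl\|\E_\x[f(\x)\nabla_\th f_{\th_0}(\x)]\bigr\|_2^2$, as claimed. I expect the only delicate points to be bookkeeping rather than conceptual: justifying the interchange of expectation with the infinite Mercer sum (covered by the uniform convergence in Mercer's theorem for a continuous positive-definite kernel) and confirming $\|f\|_2^2=\sum_j c_j^2$, which relies precisely on the hypothesis $f\in\mathcal{H}$ so that $f$ sits in the closed span of the eigenfunctions. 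The Cauchy--Schwarz step itself is immediate, so the real content of the lemma is the clean reindexing of $\alpha(f)$ and $\|f\|_{\bm{\Theta}}^2$ that makes the inequality transparent.
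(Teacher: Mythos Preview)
Your proof is correct and follows essentially the same route as the paper: expand $\alpha(f)$ and $\|f\|_{\bm{\Theta}}^2$ in the Mercer basis, apply Cauchy--Schwarz to the sequences $c_j/\sqrt{\lambda_j}$ and $\sqrt{\lambda_j}\,c_j$, and then specialize to the NTK by factoring the double expectation through the inner product. If anything, your write-up is a bit more careful than the paper's about the identification $\|f\|_2^2=\sum_j c_j^2$ and the interchange of sum and expectation.
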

\emph{Proof} See Appendix.

At this point, it is important to highlight that in most practical applications we do not deal with infinitely-wide networks, and hence \eqref{eq:generalization_bound} can only be regarded as a learning guarantee for linearized models, i.e. $\hat{f}_\th$. Furthermore, from now on, we will interchangeably use the terms NTK and empirical NTK to simply refer to the finite-width kernels derived from \eqref{eq:approx_ntk}. In our experiments, we compute those using the \texttt{neural\_tangents} library~\cite{novakNeuralTangentsFast} built on top of the JAX framework~\cite{GoogleJax2021}, which we also use to generate the linearized models.

Similarly, as it is commonly done in the kernel literature, we will use the eigenvectors of the Gram matrix to approximate the values of the eigenfunctions $\phi_j(\x)$ over a finite dataset. We will also use the terms eigenvector and eigenfunction interchangeably. In particular, we will use $\bm{\Phi}\in\R^{m\times m}$ to denote the matrix containing the $j$th Gram eigenvector $\bm{\phi}_j\in\R^m$ in its $j$th row, where the rows are ordered according to the vector of decreasing eigenvalues $\bm{\lambda}\in\R_+^m$.

% \clearpage
\section{Linearized models can predict relative task complexity for deep networks}\label{sec:what}
A growing body of work is using the linear approximation of neural networks as kernel methods to analyze and build novel algorithms. Meanwhile, recent reports, both theoretical and empirical, have started to question if the NTK approximation can really tell anything useful about generalization for finite-width networks. In this section, we try to demystify some of these confusions and aim to shed light on the question: \emph{What can the empirical NTK actually predict about generalization?}

To that end, we conduct a systematic study with different neural networks and their linearized approximations given by \eqref{eq:approx_ntk}, which we train to solve a structured array of predictive tasks with different complexity. Our results indicate that for many problems the linear models and the deep networks do agree in the way they \textit{order} the complexity of learning certain tasks, even if their performance on the same problems can greatly differ. This explains why the NTK approximation can be used in applications where the main goal is to \emph{just} predict the relative difficulty of different tasks.

\subsection{Learning NTK eigenfunctions}\label{subsec:eigenfunctions}
In kernel theory, the sample and optimization complexity required to learn a given function is normally bounded by its kernel norm~\cite{LearningKernelsSupport2002}, which intuitively measures the alignment of the target function with the eigenfunctions of the kernel. The eigenfunctions themselves, thus, represent a natural set of target functions with increasingly high learning complexity -- according to the increasing value of their associated eigenvalues -- for kernel methods. Since our goal is to find if the kernel approximation can indeed predict generalization for neural networks, we evaluate the performance of these networks when learning the eigenfunctions of their NTKs.

In particular, we generate a sequence of datasets constructed using the standard CIFAR10~\cite{krizhevskyLearningMultipleLayers2009} samples, which we label using different binarized versions of the NTK eigenfunctions. That is, to every sample $\x$ in CIFAR10 we assign it the label $\sign(\phi_j(\x))$, where $\phi_j$ represents the $j$th eigenfunction of the NTK at initialization (see Sec.~\ref{sec:preliminaries}). In this construction, the choice of CIFAR10 as supporting distribution makes our experiments close to real settings which might be conditioned by low dimensional structures in the data manifold~\cite{ghorbaniWhenNeuralNetworks2020a, paccolatGeometricCompressionInvariant2021a}; while the choice of eigenfunctions as targets guarantees a progressive increase in complexity, \emph{at least}, for the linearized networks. Specifically, for $\phi_j$ the alignment is given by $\alpha(\phi_j)=\lambda_j$ (see Appendix).

\begin{figure}[ht!]
  \centering
  \includegraphics[width=\textwidth]{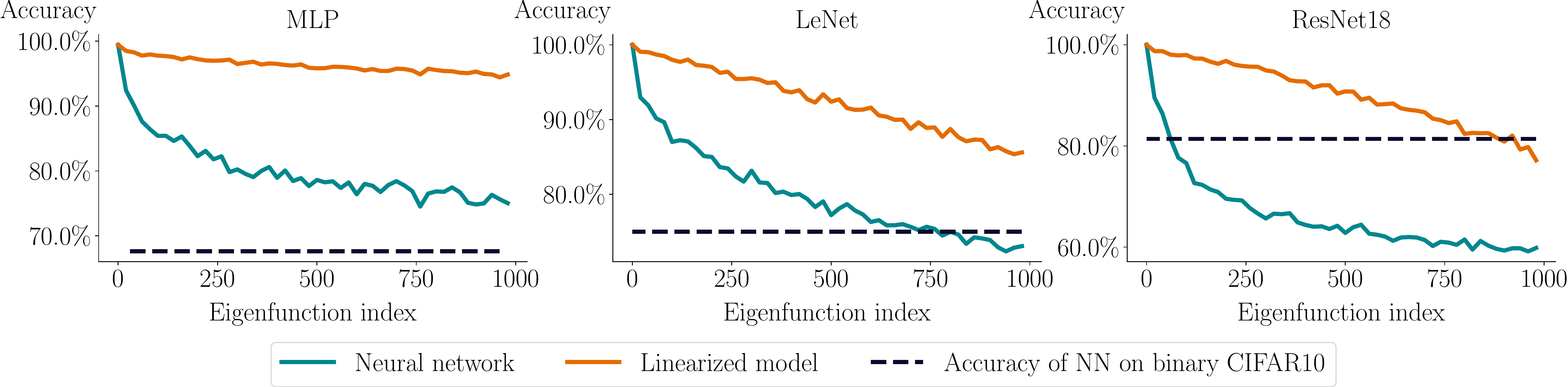}
  \caption{Validation accuracy of different neural network architectures and their linearizations when trained on binarized eigenfunctions of the NTK at initialization, i.e., $\x\mapsto \sign(\phi_j(\x))$. As a baseline, we also provide the accuracies on CIFAR2 (see Sec.~\ref{sec:when}). }
  \label{fig:eigenfunction_generalization}
  \vspace{-1em}
\end{figure}

We train different neural network architectures -- selected to cover the spectrum of small to large models~\cite{rosenblattPerceptronProbabilisticModel1958,lecunGradientbasedLearningApplied1998, resnet} -- and their linearized models given by \eqref{eq:approx_ntk}. Unless stated otherwise, we always use the same standard training procedure consisting of the use of stochastic gradient descent (SGD) to optimize a logistic loss, with a decaying learning rate starting at $0.05$ and momentum set to $0.9$. The values of our metrics are reported after $100$ epochs of training\footnote{Our code can be found at \url{https://github.com/gortizji/linearized-networks}.}.

Fig.~\ref{fig:eigenfunction_generalization} summarizes the main results of our experiments\footnote{Results with equivalent findings for other training schemes and datasets can be found in the Appendix.}. Here, we can clearly see how the validation accuracy of networks trained to predict targets aligned with $\{\phi_j\}_{j=1}^\infty$ progressively drops with decreasing eigenvalues for both linearized models -- as predicted by the theory -- as well as for neural networks. Similarly, Fig.~\ref{fig:eigenfunctions_dynamics} shows how the training dynamics of these networks also correlate with eigenfunction index. Specifically, we see that networks take more time to fit eigenfunctions associated to smaller eigenvalues, and need to travel a larger distance in the weight space to do so.

Overall, our observations reveal that sorting tasks based on their alignment with the NTK is a good predictor of learning complexity both for linear models and neural networks. Interestingly, however, we can also observe large performance gaps between the models. Indeed, even if the networks and the kernels agree on which eigenfunctions are harder to learn, the kernel methods perform comparatively much better. This clearly differs from what was previously observed for other tasks~\cite{allen-zhuWhatCanResNet2019,fortDeepLearningKernel2020,ghorbaniWhenNeuralNetworks2020a,malachQuantifyingBenefitUsing2021}, and hence, highlights that the existence of a \emph{non-linear advantage} is not always certain.

\begin{figure}[t]
  \centering
  \includegraphics[width=\textwidth]{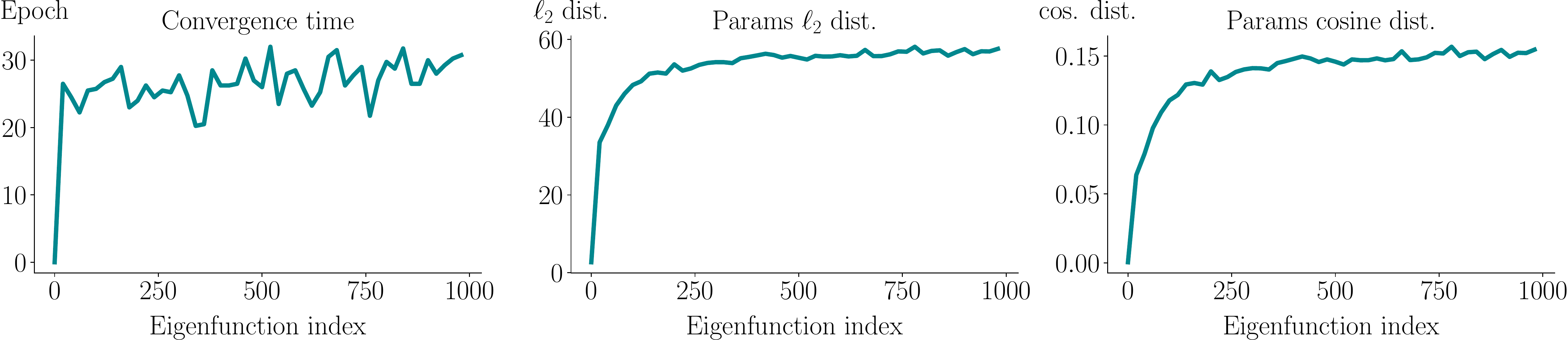}
  \caption{Correlation of different training metrics with the index of the eigenfunction the network is trained on. Plots show the number of training iterations taken by the network to achieve a $0.01$ training loss, and the $\ell_2$ and cosine distances between initialization and final parameters for a ResNet18 trained on the binarized eigenfunctions of the NTK at initialization.}
  \label{fig:eigenfunctions_dynamics}
  \vspace{-1em}
\end{figure}

\subsection{Learning linear predictors}\label{subsec:nads}
The NTK eigenfunctions are one example of a canonical set of tasks with increasing hardness for kernel methods, whose learning complexity for neural networks follows the same order. However, could there be more examples? And, are the previously observed correlations useful to predict other generalization phenomena? In order, to answer these questions, we propose to analyze another set of problems, but this time using a sequence of targets of increasing complexity for neural networks.

In this sense, it has recently been observed that, for convolutional neural networks (CNNs), the set of linear predictors -- i.e., hyperplanes separating two distributions -- represents a function class with a wide range of learning complexities among its elements~\cite{ortizjimenez2020neural}. In particular, it has been confirmed empirically that it is possible to rank the complexity for a neural network to learn different linearly separable tasks based only on its neural anisotropy directions (NADs).

\begin{definition*}[Neural anisotropy directions]
The NADs of a neural network are the ordered sequence of orthonormal vectors $\{\v_j\}_{j=1}^d$ which form a full basis of the input space and whose order is determined by the sample complexity required to learn the linear predictors $\{\x\mapsto\sign(\v_j^\top\x)\}_{j=1}^d$.
\end{definition*}

In~\cite{ortizjimenez2020neural}, the authors provided several heuristics to compute the NADs of a neural network. However, we now provide a new, more principled, interpretation of the NADs, showing one can also obtain this sequence using a kernel approximation. To that end, we will make use of the following theorem.
\begin{theorem}\label{thm:nads}
Let $\bm{u}\in\mathbb{S}^{d-1}$ be a unitary vector that parameterizes a linear predictor $g_{\bm{u}}(\x)=\bm{u}^\top\x$, and let $\x\sim\mathcal{N}(\bm{0}, \bm{I})$. The alignment of $g_{\bm{u}}$ with $\bm{\Theta}$ is given by
\begin{equation}
    \alpha(g_{\bm{u}})=\left\|\mathbb{E}_{\x}\left[ \nabla^2_{\x,\bm{\theta}}f_{\bm{\theta}_0}(\x)\right]\bm{u}\right\|_2^2,
\end{equation}
where $\nabla^2_{\x,\bm{\theta}}f_{\bm{\theta}_0}(\x)\in\R^{n\times d}$ denotes the derivative of $f_\th$ with respect to the weights and the input.
\end{theorem}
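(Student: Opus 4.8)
My plan is to reduce the statement directly to the second identity of Lemma~\ref{lemma:alignment} and then convert the resulting first moment into the claimed second-order form via Gaussian integration by parts. Concretely, I would first instantiate the NTK alignment formula $\alpha(f)=\bigl\|\E_\x[f(\x)\nabla_\th f_{\th_0}(\x)]\bigr\|_2^2$ at the target $f=g_{\bm u}$. Since $g_{\bm u}(\x)=\bm u^\top\x$ is scalar-valued, this yields
\begin{equation}
    \alpha(g_{\bm u})=\left\|\E_\x\!\left[(\bm u^\top\x)\,\nabla_\th f_{\th_0}(\x)\right]\right\|_2^2
    =\left\|\E_\x\!\left[\nabla_\th f_{\th_0}(\x)\,\x^\top\right]\bm u\right\|_2^2,
\end{equation}
where I have used that $\bm u^\top\x=\x^\top\bm u$ is a scalar and pulled the (deterministic) vector $\bm u$ out of the expectation, leaving the matrix $\E_\x[\nabla_\th f_{\th_0}(\x)\,\x^\top]\in\R^{n\times d}$. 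The whole theorem then amounts to identifying this matrix with $\E_\x[\nabla^2_{\x,\th}f_{\th_0}(\x)]$.

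The key tool for this identification is Stein's lemma (Gaussian integration by parts): for $\x\sim\mathcal N(\bm 0,\I)$ and a suitably regular vector field $\bm h:\R^d\to\R^n$ one has $\E_\x[\bm h(\x)\,\x^\top]=\E_\x[\nabla_\x\bm h(\x)]$, where $\nabla_\x\bm h\in\R^{n\times d}$ is the Jacobian with respect to the input. I would apply this componentwise with $\bm h(\x)=\nabla_\th f_{\th_0}(\x)$, treating the network's weight-gradient as a fixed $\R^n$-valued function of the input (the weights are frozen at $\th_0$). Since differentiating this field in $\x$ produces the mixed second derivative, $\nabla_\x\bigl(\nabla_\th f_{\th_0}(\x)\bigr)=\nabla^2_{\x,\th}f_{\th_0}(\x)\in\R^{n\times d}$, Stein's identity gives $\E_\x[\nabla_\th f_{\th_0}(\x)\,\x^\top]=\E_\x[\nabla^2_{\x,\th}f_{\th_0}(\x)]$, and substituting this back into the display above closes the argument.

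The main obstacle I anticipate is purely one of rigor rather than strategy: justifying the hypotheses of Stein's lemma. This requires that $f_{\th_0}$ be differentiable in $\x$ with the relevant moments finite, so that the boundary terms in the Gaussian integration by parts vanish and the expectation and differentiation can be interchanged; for smooth activations this follows from dominated convergence once the gradient fields are controlled by the fast-decaying Gaussian tail, but for piecewise-linear activations (ReLU) one must argue that the nondifferentiability occurs on a measure-zero set and work with the weak derivative. A secondary, lower-stakes point is bookkeeping with the tensor conventions: one must keep the weight index ($n$, the rows) and the input index ($d$, the columns) consistently placed so that $\nabla_\th f_{\th_0}(\x)\,\x^\top$ and $\nabla^2_{\x,\th}f_{\th_0}(\x)$ are matched as the same $n\times d$ object before right-multiplying by $\bm u$. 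With those regularity caveats dispatched, the result follows in essentially two lines.
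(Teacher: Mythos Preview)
Your proposal is correct and follows essentially the same two-step argument as the paper: instantiate the NTK alignment formula of Lemma~\ref{lemma:alignment} at $g_{\bm u}$ to get $\alpha(g_{\bm u})=\|\E_\x[\nabla_\th f_{\th_0}(\x)\,\x^\top]\bm u\|_2^2$, then invoke Stein's lemma to replace the first moment by the mixed second derivative. The paper's own proof is in fact terser and omits the regularity discussion you raise, so your write-up is, if anything, slightly more careful.
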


\emph{Proof} See Appendix.

Theorem~\ref{thm:nads} gives an alternative method to compute NADs. Indeed, in the kernel regime, the NADs are simply the right singular vectors of the matrix of mixed-derivatives $\mathbb{E}_{\x}\nabla^2_{\x,\bm{\theta}}f_{\bm{\theta}_0}(\x)$ of the network\footnote{All predictors have the same $L_2$ norm. Hence, their alignment is inversely proportional to their kernel norm.} Note however, that this interpretation is just based on an approximation, and hence there is no \emph{explicit} guarantee that these NADs will capture the direcional inductive bias of deep networks. Our experiments show otherwise, as they reveal that CNNs actually rank the learning complexity of different linear predictors in a way compatible with Theorem~\ref{thm:nads}.

Indeed, as shown in Fig.~\ref{fig:nads}, when trained to classify a set of linearly separable datasets\footnote{Full details of the experiment can be found in the Appendix.}, aligned with the NTK-predicted NADs, CNNs perform better on those predictors with a higher kernel alignment (i.e., corresponding to the first NADs) than on those with a lower one (i.e., later NADs). The fact that NADs can be explained using kernel theory constitutes another clear example that theory derived from a na\"ive linear expansion of a neural network can sometimes capture important trends in the inductive bias of deep networks; even when we observe a clear performance gap between linear and non-linear models. Surprisingly, neural networks exhibit a strong \emph{non-linear advantage} on these tasks, even though these NADs were explicitly constructed to be well-aligned with the linear models.

\begin{figure}[t]
  \centering
  \includegraphics[width=\textwidth]{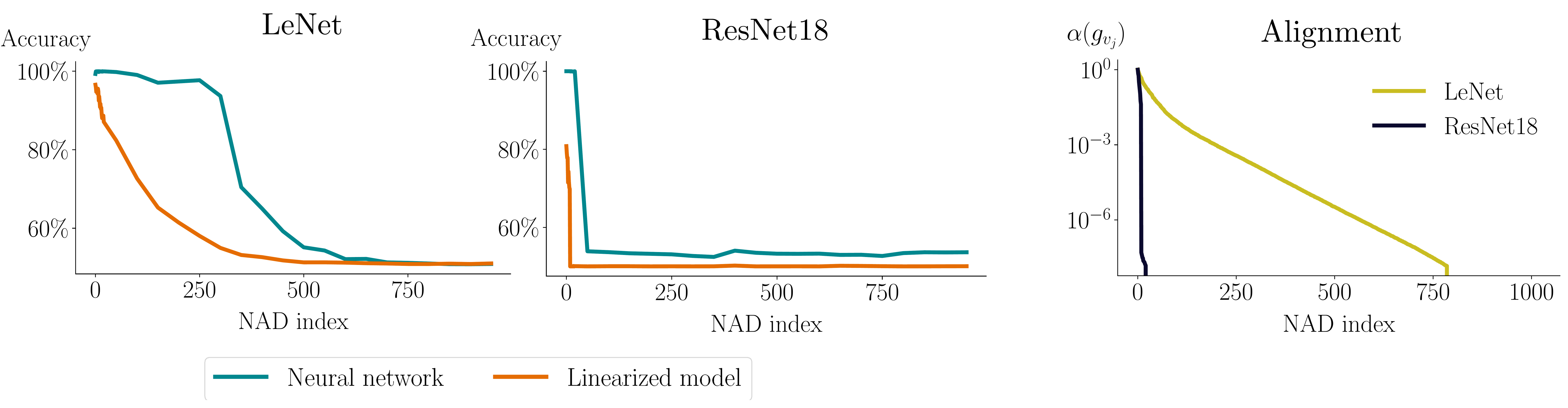}
  \caption{(Left) Performance comparison of different neural network architectures with their linearizations when learning linear target functions aligned with increasing NADs, i.e., $\x\mapsto \sign(\bm{v}^\top \x)$. (Right) Predicted value of the alignment of the predictors with $\bm{\Theta}$, i.e., $\alpha(g_{\bm{v}_j})$ (see Theorem~\ref{thm:nads})}\label{fig:nads}
  \vspace{-1em}
\end{figure}

On the other hand, the fact that the empirical NTK of standard networks presents such strong directional bias is remarkable on its own, as it reveals important structure of the underlying architectures. Interestingly, recent theoretical studies~\cite{kernelInvarianceDonhauser} have found that the standard rotational invariance assumption in kernel theory~\cite{LearningKernelsSupport2002} might be too restrictive to explain generalization in many settings. Hence, showing that the kernels of neural networks have a strong rotational variance, clearly strengthens the link between the study of these kernels and deep learning theory.

Overall, our results explain why previous heuristics that used the NTK to rank the complexity of learning certain tasks~\cite{tancikFourierFeaturesLet2020,zancatoPredictingTrainingTime2020b,deshpandeLinearizedFrameworkNew2021} were successful in doing so. Specifically, by systematically evaluating the performance of neural networks on tasks of increasing complexity for their linearized approximations, we have observed that the non-linear dynamics on these networks do not change the way in which they sort the complexity of these problems. However, we should not forget that the differences in performance between the neural networks and their approximation are very significant and whether they favor or not neural networks depends on the task.

\section{Sources of the non-linear (dis)advantage}\label{sec:when}
In this section, we study in more detail the mechanisms that separate neural networks from their linear approximations and that lead to their significant performance differences. Specifically, we will show that there are important nuances involved in the comparison of linear and non-linear models, which depend on number of training samples, architecture and target task.

To shed more light on these complex relations, we will conduct a fine-grained analysis of the dynamics of neural networks and study the evolution of their empirical NTK. We will show that the kernel dynamics can explain why neural networks converge much faster than kernel methods, even though this rapid adaptation can sometimes be imperfect and lead the networks to overfit.

\subsection{The non-linear advantage depends on the sample size}\label{subsec:samples}
As we have seen, there exist multiple problems in which neural networks perform significantly better than their linearized approximations (see Sec.~\ref{subsec:nads}), but also others where they do not (see Sec.~\ref{subsec:eigenfunctions}). We now show, however, that the magnitude of these differences is influenced by the training set size.

\begin{figure}[t]
  \centering
  \includegraphics[width=\textwidth]{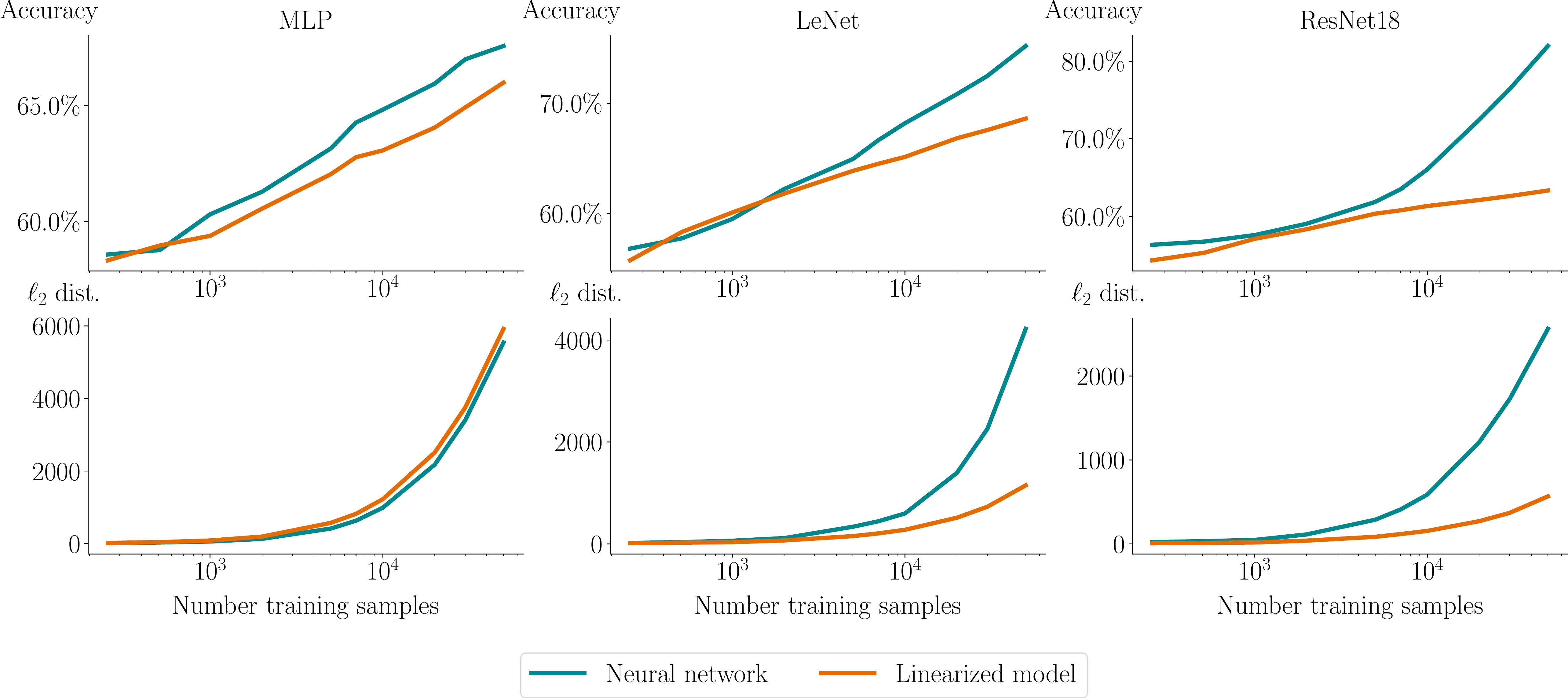}
  \caption{Comparison of test accuracy (top) and parameter distance to initialization (bottom) between neural networks and their linear approximations trained on CIFAR2 with different training set sizes. Plots show average over five different random seeds and when the test accuracy has saturated.}\label{fig:cifar-num-samples}
  \vspace{-1em}
\end{figure}

We can illustrate this phenomenon by training several neural networks to predict some semantically-meaningful labels of an image dataset. In particular, and for consistency with the previous two-class examples, we deal with a binary version of CIFAR10 and assign label $+1$ to all samples from the first five classes in the dataset, and label $-1$ to the rest. We will refer to this dataset as CIFAR2. Indeed, as seen in Fig.~\ref{fig:cifar-num-samples}, some neural networks exhibit a large non-linear advantage on this task, but this advantage mostly appears when training on larger datasets. This phenomenon suggests that the inductive bias that boosts neural networks' performance involves an important element of scale.

One can intuitively understand this behavior by analyzing the distance traveled by the parameters during optimization (see bottom row of Fig.~\ref{fig:cifar-num-samples}). Indeed, for smaller training set sizes, the networks can find solutions that fit the training data closer to their initialization more easily\footnote{Note that linear and non-linear models achieve their maximum test accuracy in roughly the same number of epochs regardless of the training set size. This contrasts with the dynamics of the training loss, for which the linear models take significantly more iterations to perfectly fit the training data than non-linear ones. In this sense, the results of Fig.~\ref{fig:cifar-num-samples} present a snapshot taken when both networks approximately achieve their maximum generalization performance.} As a result, the error incurred by the linear approximation in these cases is smaller. This can explain why there are no significant performance gaps between NTK-based models and neural networks for small-data~\cite{aroraHARNESSINGPOWERINFINITELY2020}, and it also highlights the strength of the linear approximation in this regime.

\subsection{The kernel rotates in a single axis}\label{subsec:evolution-alignment}
So far we have mostly analyzed results dealing with linear expansions around the weight initialization $\th_0$. However, recent empirical studies have argued that linearizing at later stages of training induces smaller approximation errors~\cite{fortDeepLearningKernel2020}, suggesting that the NTK dynamics can better explain the final training behavior of a neural network. To the best of our knowledge, this phenomenon is still poorly understood, mostly because it hinges on understanding the non-linear dynamics of deep networks. We now show, however, that understanding the way the spectrum of the NTK evolves during training can provide important insights into these dynamics.

To that end, we first analyze the evolution of the principal components of the empirical NTK in relation to the target function. Specifically, let $\bm{\Phi}_t$ denote the matrix of first $K$ eigenvectors of the Gram matrix of $\bm{\Theta}_t$ obtained by linearizing the network after $t$ epochs of training, and let $\bm{y}\in\R^m$ be the vector of training labels. In Sec.~\ref{subsec:eigenfunctions}, we have seen that both neural networks and their linear approximations perform better on targets aligned with the first eigenfunctions of $\bm{\Theta}_0$. We propose, therefore, to track the energy concentration $\|\bm{\Phi}_t\bm{y}\|_2/\|\bm{y}\|_2$ of the labels onto the first eigenfunctions with the aim to identify a significant transformation of the principal eigenspace $\operatorname{span}(\bm{\Phi}_t)$.

Fig.~\ref{fig:energy-cifar} shows the result of this procedure applied to a CNN trained to classify CIFAR2. Strikingly, the amount of energy of the target function that is concentrated on the $K=50$ first eigenfunctions of the NTK significantly grows during training. This is a heavily non-linear phenomenon -- by definition the linearized models have a fixed kernel -- and it hinges on a dynamical realignment of $\bm{\Theta}_t$ during training. That is, training a neural network rotates $\bm{\Theta}_t$ in a way that increases $\|\bm{\Phi}_t\bm{y}\|_2/\|\bm{y}\|_2$.

Prior work has also observed a similar phenomenon, albeit in a more restricted experimental setup: These observations have been confirmed only in a few datasets, and required the use of minibatches to approximate the alignment of the NTK to track the evolution of a small portion of the eigenspace~\cite{kopitkovNeuralSpectrum2020, paccolatGeometricCompressionInvariant2021a, baratinNeuralAlignment2021}. However, we now show that that the kernel rotation is prevalent across training setups, and that it can also be observed when training to solve other problems. In fact, a more fine-grained inspection reveals that the kernel rotation mostly happens in a single functional axis. It maximizes the alignment of $\bm{\Theta}_t$ with the target function $f$, i.e.,  $\alpha_t(f)=\|\mathbb{E}_\x\left[f(\x)\nabla_\th f_{\th_t}(\x)\right]\|_2^2$, but does not greatly affect the rest of the spectrum. Indeed, we can see how during training $\alpha_t$ grows significantly more for the target than for any other function.

\begin{figure}[t]
\begin{minipage}[t]{.38\textwidth}
\includegraphics[width=\columnwidth]{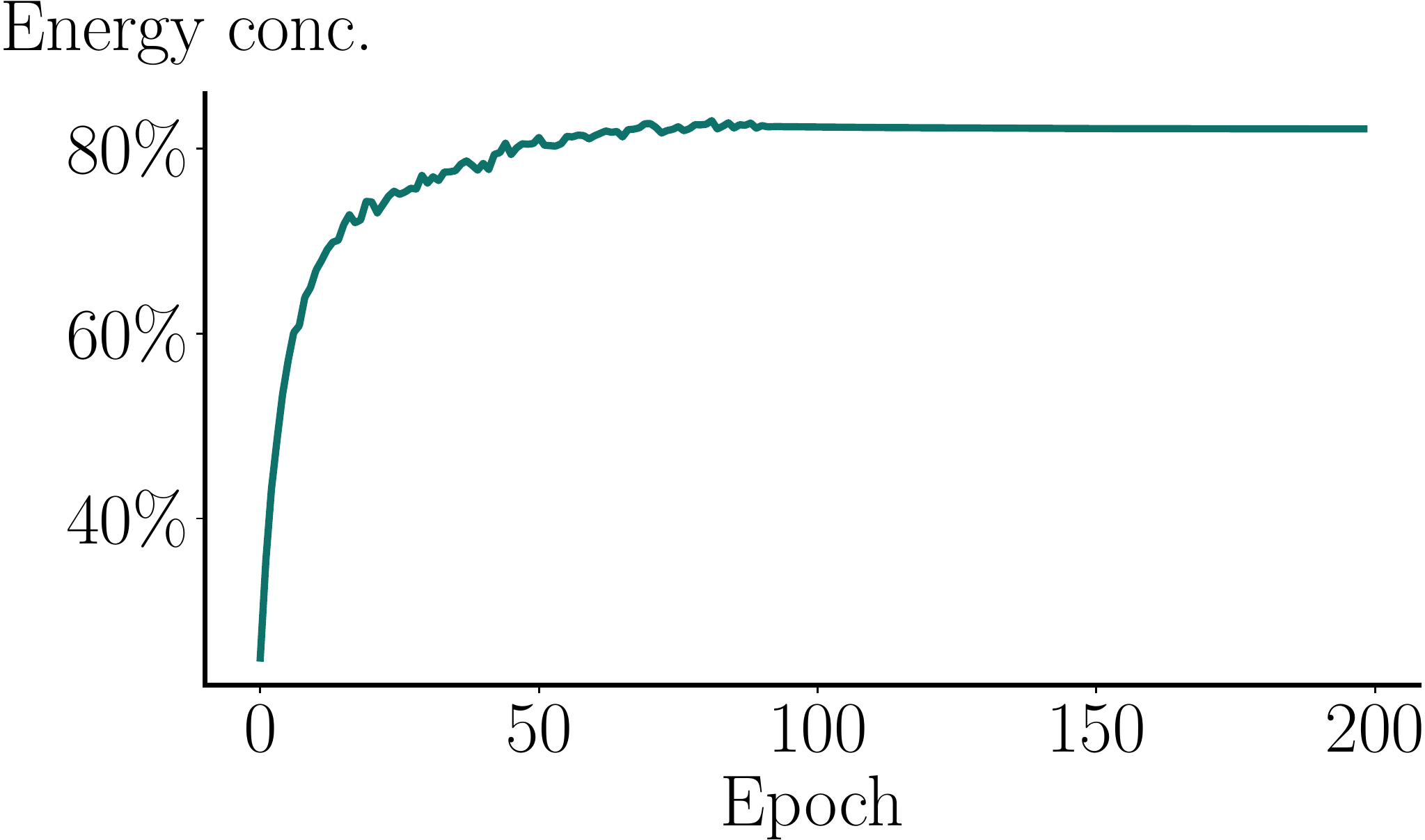}
\caption{Energy concentration of $10,000$ CIFAR2 training labels on the first $K=50$ eigenvectors of the kernel Gram matrices $\bm{\Phi}_t$ of a LeNet.}
\label{fig:energy-cifar}
\end{minipage}
\hfill
\begin{minipage}[t]{.59\textwidth}
\includegraphics[width=\columnwidth]{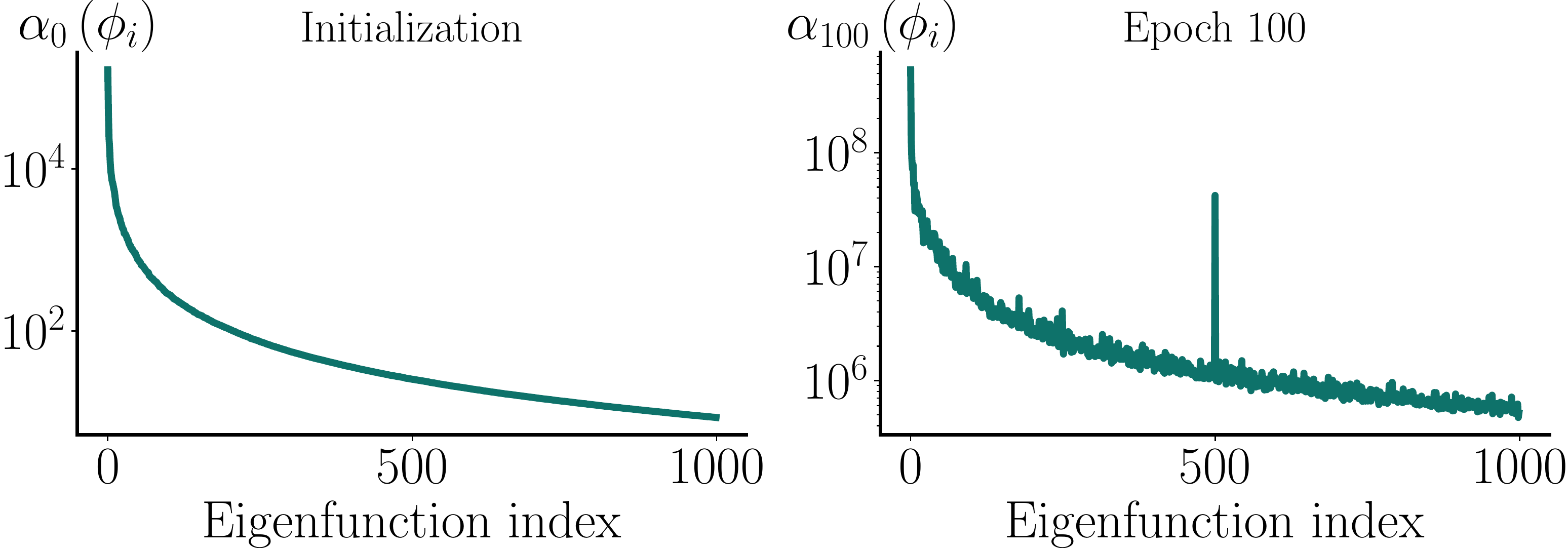}
\caption{Alignment of the eigenfunctions of the NTK at initialization for a LeNet with the NTKs computed at the beginning (left) and at the end of $100$ epochs of training (right) to predict the $500$th eigenfunction at initialization.}
\label{fig:alignment-eig-500}
\end{minipage}
\vspace{-1em}
\end{figure}

This is clearly illustrated in Fig.~\ref{fig:alignment-eig-500}, where we compare $\alpha_0$ and $\alpha_{100}$ for the first $1,000$ eigenfunctions of $\bm{\Theta}_0$, $\{\phi_j\}_{j=1}^{1,000}$, when training to predict and arbitrary eigenfunction\footnote{Recall that $\alpha_{0}(\phi_{j})=
  \lambda_{j}$. Similar results for other networks and tasks can be found in the Appendix}  $\x\mapsto\sign(\phi_{500}(\x))$. Strikingly, we can see that after training to predict $\phi_{500}$, $\alpha(\phi_{500})$ increases much more than $\alpha(\phi_j)$ for any other eigenfunction $\phi_j$. In fact, the relative alignment between all other eigenfunctions does not change much. Note, however, that the absolute values of all alignments have also grown, a phenomenon which is due to a general increase in the Jacobian norm $\E_\x\|\nabla_\th f_{\th_t}(\x)\|_2$ during training.

These results show that there is great potential in using linear approximations to investigate important deep learning phenomena involving pretrained networks as in~\cite{maddoxFastAdaptation2021, deshpandeLinearizedFrameworkNew2021}. Indeed, as $\alpha_t(f)$ is higher at the end of training, it can be expected that a linear expansion at this late stage will be able to capture the inductive bias needed to fine-tune on targets similar to $f$. The intuition behind this lies in the geometry of the NTK RKHS $\mathcal{H}$. Note that in $\mathcal{H}$, $\|\hat{f}_{\bm{\theta}}\|_{\bm{\Theta}}$=$\|\bm{\theta}-\bm{\theta}_0\|_2$~\cite{LearningKernelsSupport2002}; and recall that, as indicated by Lemma~\ref{lemma:alignment}, target functions with small $\|f\|_{\bm{\Theta}}$ also have high $\alpha(f)$. This means that in $\mathcal{H}$ those tasks with a high $\alpha(f)$ are indeed represented by weights closer to the origin of the linearization, which thus makes the approximation error of the linearization smaller for these targets as observed in~\cite{fortDeepLearningKernel2020}.

\subsection{Kernel rotation improves speed of convergence, but can hurt generalization}\label{subsec:speed}

The rotation of $\bm{\Theta}_t$ during training is an important mechanism that explains why the NTK dynamics can better capture the behavior of neural networks at the end of training. However, it is also fundamental in explaining the ability of neural networks to quickly fit the training data. Specifically, it is important to highlight the stark contrast, at a dynamical level, between linear models and neural networks. Indeed, we have consistently observed across our experiments that neural networks converge much faster to solutions with a near-zero training loss than their linear approximations.

We can explain the influence of the rotation of the NTK on this phenomenon through a simple experiment. In particular, we train three different models to predict another arbitrary eigenfunction $\x\mapsto\sign(\phi_{400}(\x))$: i) a ResNet18, ii) its linearization around initialization, and iii) an unbiased linearization around the solution of the ResNet18 $\th^\star$, i.e., $\hat{f}_{\th}(\x)=(\th-\th_0)^\top\nabla_\th f_{\th^\star}(\x)$. 

Fig.~\ref{fig:two-step-kernel} compares the dynamics of these models, revealing that the neural network indeed converges much faster than its linear approximation. We see, however, that the kernelized model constructed using the pretrained NTK of the network has also a faster convergence. We conclude, therefore, that, since the difference between the two linear models only lies on the kernel they use, it is indeed the transformation of the kernel through the non-linear dynamics of the neural network that makes these models converge so quickly. Note, however, that the rapid adaptation of $\bm{\Theta}_t$ to the training labels can have heavy toll in generalization, i.e., the model based on the pretrained kernel converges much faster than the randomly initialized one, but has a much lower test accuracy (comparable to the one of the neural network).

The dynamics of the kernel rotation are very fast, and we observe that the NTK overfits to the training labels in just a few iterations. Fig.~\ref{fig:overfit_speed} illustrates this process, where we see that the performance of the linearized models with kernels extracted after a few epochs of non-linear pretraining decays very rapidly. This observation is analogous to the one presented in~\cite{fortDeepLearningKernel2020}, although in the opposite direction. Indeed, instead of showing that pretraining can greatly improve the performance of the linarized networks, Fig.~\ref{fig:overfit_speed} shows that when the training task does not abide by the inductive bias of the network, pretraining can rapidly degrade the linear network performance. On the other hand, on CIFAR2 (see Sec.~\ref{subsec:samples}) the kernel rotation does greatly improve test accuracy for some models.

The fact that the non-linear dynamics can both boost and hurt generalization highlights that the kernel rotation is subject to its own form of inductive bias. In this sense, we believe that explaining the non-trivial coupling between the kernel rotation, alignment, and training dynamics is an important avenue for future research, which will allow us to better understand deep networks.

\begin{figure}[t]
  \centering
  \includegraphics[width=0.9\textwidth]{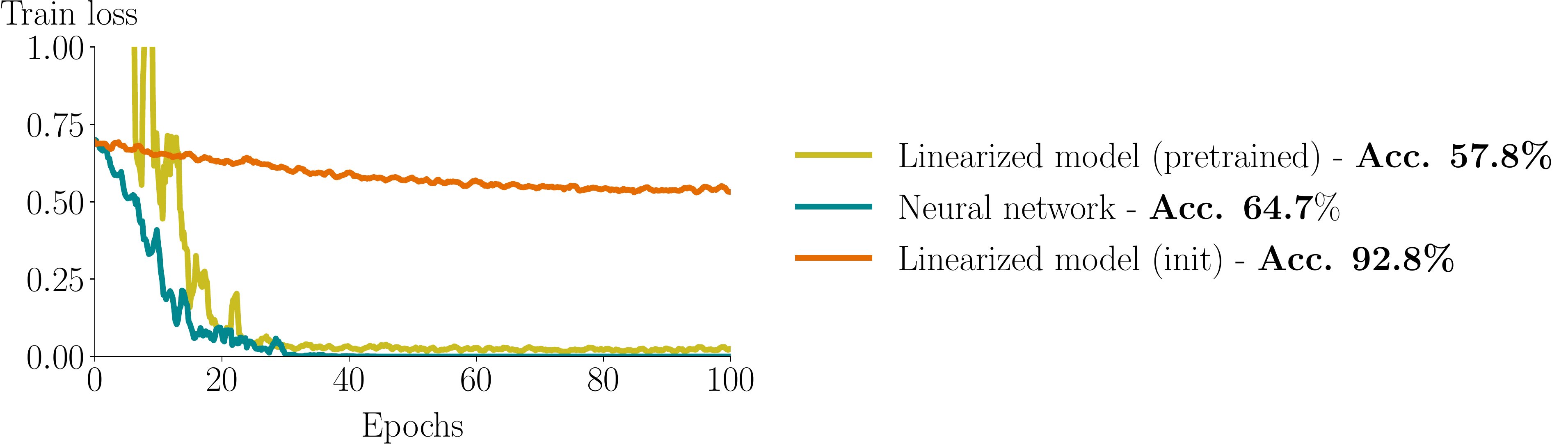}
  \caption{(Left) Evolution of training loss while learning $\x\mapsto\sign(\phi_{400}(\x))$ for a ResNet18 and two linearized models based on $\bm{\Theta}_0$ and $\bm{\Theta}_{100}$ of the ResNet18. (Right) Test accuracy on $\x\mapsto\sign(\phi_{400}(\x))$ for the three models.}
  \label{fig:two-step-kernel}
  \vspace{-1em}
\end{figure}

\begin{figure}[ht!]
  \centering
  \includegraphics[width=\textwidth]{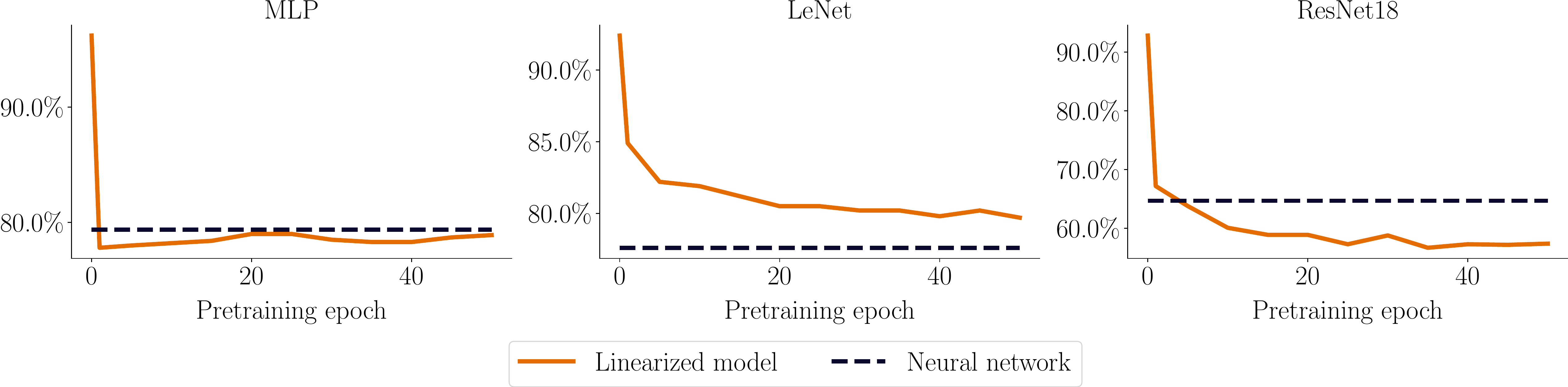}
  \caption{Performance of linearized networks with kernels extracted after different epochs of pretraining of a non-linear network learning $\bm x \mapsto \sign\left(\phi_{400}(\bm x)\right)$. The dashed line represents the performance of fully non-linear training on the same task, and the value at epoch $0$ corresponds to the linearized model at initialization.}
  \label{fig:overfit_speed}
  \vspace{-1em}
\end{figure}

\section{Final remarks}
Explaining generalization in deep learning~\cite{zhangUnderstandingDeepLearning2017} has been the subject of extensive research in recent years~\cite{neyshaburUnderstandingRoleOverParametrization2018a,soudryImplicitBiasGradient2018,gunasekarCharacterizingImplicitBias2018a,vcPiecewiseNN2019,universalAbbe2020}. The NTK theory is part of this trend, and it has been used to prove convergence and generalization of very wide networks~\cite{jacotNeuralTangentKernel,leeWideNeuralNetworks2020,duGradientDescentFinds,biettiInductiveBiasNeural2019,zouImprovedAnalysisTraining,belkinHessian2020, aroraCNTK2019}. Most of these efforts, however, still cannot explain the behavior of the models used in practice. Therefore, multiple authors have proposed to study neural networks empirically, with the aim to identify novel deep learning phenomena which can be later explained theoretically~\cite{nagarajanUniformConvergenceMay2019,nakkiran2021the,randomLabels2020,fortDeepLearningKernel2020}.

In this work, we have followed a similar approach, and presented a systematic study comparing the behavior of neural networks and their linear approximations on different tasks. Previous studies had shown there exist tasks that neural networks can solve but kernels cannot~\cite{allen-zhuWhatCanResNet2019,ghorbaniWhenNeuralNetworks2020a, malachQuantifyingBenefitUsing2021,fortDeepLearningKernel2020}. Our work complements those results, and provides examples of tasks where kernels perform better than neural networks (see Sec.\ref{subsec:eigenfunctions}). We see this result as an important milestone for deep learning theory, as it shifts the focus of our research from asking ``why are non-linear networks better than kernel methods?'' to ``what is special about our standard tasks which makes neural networks adapt so well to them?''. Moving forward, knowing which tasks a neural network can and cannot solve efficiently will be fundamental to explain their inductive bias~\cite{NFLWolpert96}.

Our findings complement the work in \cite{fortDeepLearningKernel2020} where the authors also empirically compared neural networks and their linear approximations, but focused on a single training task. In this work, we have built from these observations and studied models on a more diverse set of problems. Doing so, we have shown that the alignment with the empirical NTK can rank the learning complexity of certain tasks for neural networks, despite being agnostic to the non-linear (dis)advantages. In this sense, we have revealed that important factors such as sample size, architecture, or target task can greatly influence the gap between kernelized models and neural networks.

Finally, our dynamical study of the kernel rotation complements the work of~\cite{kopitkovNeuralSpectrum2020, paccolatGeometricCompressionInvariant2021a, baratinNeuralAlignment2021} and provides new important insights for future research. For example, the fact that the kernel rotates in a single axis, and that its tendency to overfit is accurately predicted by the NTK eigenfunctions can aid in the development of new models of training. Moreover, it opens the door to new algorithmic developments that could slow down the kernel rotation and potentially reduce overfitting on many tasks.

Overall, our work paves the way for new research avenues based on deep learning theory, while it also provides actionable insights to use the NTK approximation in many applications.

\section*{Acknowledgements}
We thank Emmanuel Abbé, Alessandro Favero, Apostolos Modas and Clément Vignac for their fruitful discussions and feedback.

\section*{Funding disclosure}
This work has been partially supported by Google via a Postdoctoral Fellowship and a GCP Research Credit Award.

\bibliographystyle{ieeetr}
\bibliography{arxiv.bib}

\begin{thebibliography}{10}

\bibitem{zhangUnderstandingDeepLearning2017}
C.~Zhang, S.~Bengio, M.~Hardt, B.~Recht, and O.~Vinyals, ``Understanding deep
  learning requires rethinking generalization,'' in {\em International
  Conference on Learning Representations (ICLR)}, 2017.

\bibitem{jacotNeuralTangentKernel}
A.~Jacot, C.~Hongler, and F.~Gabriel, ``Neural tangent kernel: Convergence and
  generalization in neural networks,'' in {\em Advances in Neural Information
  Processing Systems (NeurIPS)}, 2018.

\bibitem{LearningKernelsSupport2002}
B.~Sch{\"{o}}lkopf and A.~J. Smola, {\em Learning with Kernels: support vector
  machines, regularization, optimization, and beyond}.
\newblock Adaptive computation and machine learning series, {MIT} Press, 2002.

\bibitem{leeWideNeuralNetworks2020}
J.~Lee, L.~Xiao, S.~S. Schoenholz, Y.~Bahri, R.~Novak, J.~Sohl{-}Dickstein, and
  J.~Pennington, ``Wide neural networks of any depth evolve as linear models
  under gradient descent,'' in {\em Advances in Neural Information Processing
  Systems (NeurIPS)}, 2019.

\bibitem{duGradientDescentFinds}
S.~S. Du, J.~D. Lee, H.~Li, L.~Wang, and X.~Zhai, ``Gradient descent finds
  global minima of deep neural networks,'' in {\em International Conferenece on
  Machine learning (ICML)}, 2019.

\bibitem{aroraCNTK2019}
S.~Arora, S.~S. Du, W.~Hu, Z.~Li, R.~Salakhutdinov, and R.~Wang, ``On exact
  computation with an infinitely wide neural net,'' in {\em Advances in Neural
  Information Processing Systems (NeurIPS)}, 2019.

\bibitem{biettiInductiveBiasNeural2019}
A.~Bietti and J.~Mairal, ``On the inductive bias of neural tangent kernels,''
  in {\em Advances in Neural Information Processing Systems (NeurIPS)}, 2019.

\bibitem{zouImprovedAnalysisTraining}
D.~Zou and Q.~Gu, ``An improved analysis of training over-parameterized deep
  neural networks,'' in {\em Advances in Neural Information Processing Systems
  (NeurIPS)}, 2019.

\bibitem{belkinHessian2020}
C.~Liu, L.~Zhu, and M.~Belkin, ``On the linearity of large non-linear models:
  when and why the tangent kernel is constant,'' in {\em Advances in Neural
  Information Processing Systems (NeurIPS)}, 2020.

\bibitem{deshpandeLinearizedFrameworkNew2021}
A.~Deshpande, A.~Achille, A.~Ravichandran, H.~Li, L.~Zancato, C.~Fowlkes,
  R.~Bhotika, S.~Soatto, and P.~Perona, ``A linearized framework and a new
  benchmark for model selection for fine-tuning,'' {\em arXiv:2102.00084 [cs]},
  2021.

\bibitem{zancatoPredictingTrainingTime2020b}
L.~Zancato, A.~Achille, A.~Ravichandran, R.~Bhotika, and S.~Soatto,
  ``Predicting training time without training,'' in {\em Advances in Neural
  Information Processing Systems (NeurIPS)}, 2020.

\bibitem{mobahiSelfDistillationAmplifiesRegularizationa}
H.~Mobahi, M.~Farajtabar, and P.~L. Bartlett, ``Self-{{Distillation Amplifies
  Regularization}} in {{Hilbert Space}},'' in {\em Advances in Neural
  Information Processing Systems (NeurIPS)}, 2020.

\bibitem{tancikFourierFeaturesLet2020}
M.~Tancik, P.~P. Srinivasan, B.~Mildenhall, S.~Fridovich{-}Keil, N.~Raghavan,
  U.~Singhal, R.~Ramamoorthi, J.~T. Barron, and R.~Ng, ``Fourier features let
  networks learn high frequency functions in low dimensional domains,'' in {\em
  Advances in Neural Information Processing Systems (NeurIPS)}, 2020.

\bibitem{gebhartUnifiedPathsPerspective2020}
T.~Gebhart, U.~Saxena, and P.~Schrater, ``A {{Unified Paths Perspective}} for
  {{Pruning}} at {{Initialization}},'' {\em arxiv:2101.10552}, 2021.

\bibitem{seyedICML2021}
G.~Bachmann, S.~Moosavi{-}Dezfooli, and T.~Hofmann, ``Uniform convergence,
  adversarial spheres and a simple remedy,'' {\em arxiv:2105.03491}, 2021.

\bibitem{aroraHARNESSINGPOWERINFINITELY2020}
S.~Arora, S.~S. Du, Z.~Li, R.~Salakhutdinov, R.~Wang, and D.~Yu, ``Harnessing
  the power of infinitely wide deep nets on small-data tasks,'' in {\em
  International Conference on Learning Representations (ICLR)}, 2020.

\bibitem{maddoxFastAdaptation2021}
W.~Maddox, S.~Tang, P.~G. Moreno, A.~G. Wilson, and A.~Damianou, ``Fast
  adaptation with linearized neural networks,'' in {\em International
  Conference on Artificial Intelligence and Statistics (AISTATS)}, 2021.

\bibitem{allen-zhuWhatCanResNet2019}
Z.~Allen{-}Zhu and Y.~Li, ``What can resnet learn efficiently, going beyond
  kernels?,'' in {\em Advances in Neural Information Processing Systems
  (NeurIPS)}, 2019.

\bibitem{ghorbaniWhenNeuralNetworks2020a}
B.~Ghorbani, S.~Mei, T.~Misiakiewicz, and A.~Montanari, ``When do neural
  networks outperform kernel methods?,'' in {\em Advances in Neural Information
  Processing Systems (NeurIPS)}, 2020.

\bibitem{malachQuantifyingBenefitUsing2021}
E.~Malach, P.~Kamath, E.~Abbe, and N.~Srebro, ``Quantifying the {{Benefit}} of
  {{Using Differentiable Learning}} over {{Tangent Kernels}},'' {\em
  arXiv:2103.01210 [cs, stat]}, 2021.

\bibitem{fortDeepLearningKernel2020}
S.~Fort, G.~K. Dziugaite, M.~Paul, S.~Kharaghani, D.~M. Roy, and S.~Ganguli,
  ``Deep learning versus kernel learning: an empirical study of loss landscape
  geometry and the time evolution of the neural tangent kernel,'' in {\em
  Advances in Neural Information Processing Systems (NeurIPS)}, 2020.

\bibitem{kopitkovNeuralSpectrum2020}
D.~Kopitkov and V.~Indelman, ``Neural spectrum alignment: Empirical study,'' in
  {\em International Conference on Artificial Neural Networks (ICANN)}, 2020.

\bibitem{paccolatGeometricCompressionInvariant2021a}
J.~Paccolat, L.~Petrini, M.~Geiger, K.~Tyloo, and M.~Wyart, ``Geometric
  compression of invariant manifolds in neural nets,'' {\em Journal of
  Statistical Mechanics: Theory and Experiment}, no.~4, 2021.

\bibitem{baratinNeuralAlignment2021}
A.~Baratin, T.~George, C.~Laurent, R.~D. Hjelm, G.~Lajoie, P.~Vincent, and
  S.~Lacoste{-}Julien, ``Implicit regularization via neural feature
  alignment,'' in {\em International Conference on Artificial Intelligence and
  Statistics (AISTATS)}, 2021.

\bibitem{ortizjimenez2020neural}
G.~{Ortiz-Jimenez}, A.~Modas, S.-M. {Moosavi-Dezfooli}, and P.~Frossard,
  ``{Neural} {Anisotropy} {Directions},'' in {\em Advances in {{Neural
  Information Processing Systems}} 34 (NeurIPS)}, 2020.

\bibitem{bartlettRademacherGaussianComplexities2001}
P.~L. Bartlett and S.~Mendelson, ``Rademacher and gaussian complexities: Risk
  bounds and structural results,'' in {\em Computational Learning Theory
  (COLT)}, 2001.

\bibitem{novakNeuralTangentsFast}
R.~Novak, L.~Xiao, J.~Hron, J.~Lee, A.~A. Alemi, J.~Sohl{-}Dickstein, and S.~S.
  Schoenholz, ``Neural tangents: Fast and easy infinite neural networks in
  python,'' in {\em International Conference on Learning Representations
  (ICLR)}, 2020.

\bibitem{GoogleJax2021}
J.~Bradbury, R.~Frostig, P.~Hawkins, M.~J. Johnson, C.~Leary, D.~Maclaurin,
  G.~Necula, A.~Paszke, J.~Vander{P}las, S.~Wanderman-{M}ilne, and Q.~Zhang,
  ``{JAX}: composable transformations of {P}ython+{N}um{P}y programs,'' 2018.

\bibitem{krizhevskyLearningMultipleLayers2009}
A.~Krizhevsky, ``Learning {{Multiple Layers}} of {{Features}} from {{Tiny
  Images}},'' tech. rep., University of Toronto, 2009.

\bibitem{rosenblattPerceptronProbabilisticModel1958}
F.~Rosenblatt, ``The {{Perceptron}}: {{A Probabilistic Model}} for
  {{Information Storage}} and {{Organization}} in {{The Brain}},'' {\em
  Psychological Review}, no.~6, 1958.

\bibitem{lecunGradientbasedLearningApplied1998}
Y.~Lecun, L.~Bottou, Y.~Bengio, and P.~Haffner, ``Gradient-based learning
  applied to document recognition,'' {\em Proceedings of the IEEE}, no.~11,
  1998.

\bibitem{resnet}
K.~He, X.~Zhang, S.~Ren, and J.~Sun, ``Deep residual learning for image
  recognition,'' in {\em 2016 {IEEE} Conference on Computer Vision and Pattern
  Recognition, {CVPR} 2016, Las Vegas, NV, USA, June 27-30, 2016}, 2016.

\bibitem{kernelInvarianceDonhauser}
K.~Donhauser, M.~Wu, and F.~Yang, ``How rotational invariance of common kernels
  prevents generalization in high dimensions,'' in {\em International
  Conference on Machine Learning (ICML)}, 2021.

\bibitem{neyshaburUnderstandingRoleOverParametrization2018a}
B.~Neyshabur, Z.~Li, S.~Bhojanapalli, Y.~LeCun, and N.~Srebro, ``The role of
  over-parametrization in generalization of neural networks,'' in {\em
  International Conference on Learning Representations (ICLR)}, 2019.

\bibitem{soudryImplicitBiasGradient2018}
D.~Soudry, E.~Hoffer, M.~S. Nacson, and N.~Srebro, ``The implicit bias of
  gradient descent on separable data,'' in {\em International Conference on
  Learning Representations (ICLR)}, 2018.

\bibitem{gunasekarCharacterizingImplicitBias2018a}
S.~Gunasekar, J.~D. Lee, D.~Soudry, and N.~Srebro, ``Characterizing implicit
  bias in terms of optimization geometry,'' in {\em International Conferenece
  on Machine learning (ICML)}, 2018.

\bibitem{vcPiecewiseNN2019}
P.~L. Bartlett, N.~Harvey, C.~Liaw, and A.~Mehrabian, ``Nearly-tight
  {VC}-dimension and pseudodimension bounds for piecewise linear neural
  networks,'' {\em Journal Machine Learning Research}, vol.~20,
  pp.~63:1--63:17, 2019.

\bibitem{universalAbbe2020}
E.~Abbe and C.~Sandon, ``On the universality of deep learning,'' in {\em
  Advances in Neural Information Processing Systems (NeurIPS)}, 2020.

\bibitem{nagarajanUniformConvergenceMay2019}
V.~Nagarajan and J.~Z. Kolter, ``Uniform convergence may be unable to explain
  generalization in deep learning,'' in {\em Advances in Neural Information
  Processing Systems (NeurIPS)}, 2019.

\bibitem{nakkiran2021the}
P.~Nakkiran, B.~Neyshabur, and H.~Sedghi, ``The deep bootstrap framework: Good
  online learners are good offline generalizers,'' in {\em International
  Conference on Learning Representations (ICLR)}, 2021.

\bibitem{randomLabels2020}
H.~Maennel, I.~M. Alabdulmohsin, I.~O. Tolstikhin, R.~J.~N. Baldock,
  O.~Bousquet, S.~Gelly, and D.~Keysers, ``What do neural networks learn when
  trained with random labels?,'' in {\em Advances in Neural Information
  Processing Systems (NeurIPS)}, 2020.

\bibitem{NFLWolpert96}
D.~H. Wolpert, ``The lack of {A} priori distinctions between learning
  algorithms,'' {\em Neural Computation}, vol.~8, no.~7, 1996.

\bibitem{ioffeBatchNormalizationAccelerating2015}
S.~Ioffe and C.~Szegedy, ``Batch {{Normalization}}: {{Accelerating Deep Network
  Training}} by {{Reducing Internal Covariate Shift}},'' in {\em Proceedings of
  the 32nd International Conference on Machine Learning (ICML)}, pp.~448--456,
  July 2015.

\bibitem{gelu}
D.~Hendrycks and K.~Gimpel, ``Bridging nonlinearities and stochastic
  regularizers with gaussian error linear units,'' {\em arxiv:1606.08415},
  2016.

\bibitem{adam}
D.~P. Kingma and J.~Ba, ``Adam: {A} method for stochastic optimization,'' in
  {\em International Conference on Learning Representations (ICLR)}, 2015.

\end{thebibliography}

\clearpage
\appendix

\section{General training setup}

As mentioned in the main text, all our models are trained using the same scheme which was selected without any hyperparameter tuning, besides ensuring a good performance on CIFAR2 for the neural networks. Namely, we train using stochastic gradient descent (SGD) to optimize a binary cross-entropy loss, with a decaying learning rate starting at $0.05$ and momentum set to $0.9$. Furthermore, we use a batch size of $128$ and train for a $100$ epochs. This is enough to obtain close-to-zero training losses for the neural networks, and converge to a stable test accuracy in the case of the linearized models\footnote{Note that the linearized models converge significantly slower than the neural networks.}. In fact, in the experiments involving CIFAR2, we train all models for $200$ epochs to allow further optimization of the linearized models. Nevertheless, even then, the neural networks perform significantly better than their linear approximations on this dataset.

In terms of models, all our experiments use the same three models: A multilayer perceptron (MLP) with two hidden layers of $100$ neurons each, the standard LeNet5 from \cite{lecunGradientbasedLearningApplied1998}, and the standard ResNet18~\cite{resnet}. We used a single V100 GPU to train all models, resulting in training times which oscillated between $5$ minutes for the MLP, to around $40$ minutes for the ResNet18.

\section{NTK computation details}

We now provide a few details regarding the computation of different quantities involving neural tangent kernels and linearized neural networks. In particular, in our experiments we make extensive use of the \texttt{neural\_tangents}~\cite{novakNeuralTangentsFast} library written in JAX~\cite{GoogleJax2021}, which provides utilities to compute the empirical NTK or construct linearized neural networks efficiently.

\paragraph{Eigendecompositions} As it is common in the kernel literature, in this work, we use the eigenvectors of the kernel Gram matrix to approximate the eigenfunctions of the NTK. Specifically, unless stated otherwise, in all our experiments we compute the Gram matrix of the NTK at initialization using the $60,000$ samples of the CIFAR10 dataset, which include both training and test samples. To that end we use the \texttt{empirical\_kernel\_fn} from \texttt{neural\_tangents} which allows to compute this matrix using a batch implementation. Note that this operation is computationally intense, scaling quadratically with the number of samples, but also quadratically with the number of classes. For instance, in the single-output setting of our experiments\footnote{Note that the binary cross-entropy loss can be computed using a single output logit.}, it takes up to 32 hours to compute the Gram matrix of the full CIFAR10 dataset for a ResNet18 using 4 V100 GPUs with 32Gb of RAM each. The computation for the LeNet and the MLP take only 20 and 3 minutes, respectively, due to their much smaller sizes.

\paragraph{Linearization} Using \texttt{neural\_tangents}, training and evaluating a linear approximation of any neural network is trivial. In fact, the library already comes with a function \texttt{linearize} which allows to obtain a fully trainable model from any differentiable JAX function. Thus, in our experiments, we treat all linearized models as standard neural networks and use the same optimization code to train them. In the case of the ResNet18 network, which includes batch normalization layers~\cite{ioffeBatchNormalizationAccelerating2015}, we fix the batch normalization parameters to their initialization values when performing the linearization. This effectively deactivates batch normalization for the linear approximations. Note that in \cite{fortDeepLearningKernel2020} they also compared neural networks with batch normalization to their linear approximations without it.

\paragraph{Alignment} Obtaining the full eigendecomposition of the NTK Gram matrix is computationally intense and only provides an approximation of the true eigenfunctions. However, in some cases it is possible to compute some of the spectral properties of the NTK in a more direct way, thus circumventing the need to compute the Gram matrix. This is for example the case for the target alignment $\alpha(f)$, which using the formula in Lemma~\ref{lemma:alignment} can be computed directly using a weighted average of the Jacobian. This is precisely the way in which we computed $\alpha(\phi_j)$ for different eigenfunctions $\{\phi_j\}_{j=1}^{1000}$ in Sec.~4.2.

\paragraph{Binarization} In most of our experiments we do not work directly with the eigenfunctions of the NTK, but rather with their binarized versions, i.e., $\sign(\phi_j(\bm x))$. Neverheless, we would like to highlight that this transformation has little effect on the direction of the targets: In such high-dimensional setting, the binarized and normal eigenvectors have an inner product of approximately 0.78, while the average inner product between random vectors is of the order of $10^{-4}$.

\section{Neural anisotropy experiments}

\subsection{Dataset construction}
We used the same linearly separable dataset construction proposed in \cite{ortizjimenez2020neural} to test the NADs computed using Theorem~\ref{thm:nads}. Specifically, for every NAD $\bm{u}$ we tested, we sampled $10,000$ training samples from $(\x,y)\sim\mathcal{D}(\bm{u})$ where
\begin{equation}
    \x=\epsilon \, y \, \bm{u} + \bm{w}\quad \text{with}\quad \bm{w}\sim\mathcal{N}(\bm{0}, \sigma(\bm{I}-\bm{u}\bm{u}^\top))\quad \text{and}\quad y\sim\mathcal{U}\{-1,1\}.
\end{equation}
As in \cite{ortizjimenez2020neural}, we used a value of $\epsilon=1$ and $\sigma=1$, and tested on another $10,000$ test samples from the same dataset.

\subsection{NAD computation}

We provide a few visual examples of the NADs computed for the LeNet and ResNet18 networks using the singular value decomposition (SVD) of the mixed second derivative. Specifically, in our preliminary experiments we did not find any significant difference in the NADs computed using the average over the data or at the origin, and hence opted to perform the SVD over $\nabla^2_{\x,\th}f_{\th_0}(0)$ instead of $\mathbb{E}_\x[\nabla^2_{\x,\th}f_{\th_0}(0)]$. This was also done in \cite{ortizjimenez2020neural}, and can be seen as a first order approximation of the expectation. Furthermore, to avoid the non-differentiability problems of the ReLU activations described in \cite{ortizjimenez2020neural} we used GeLU activations~\cite{gelu} in these experiments. A few examples of the NADs computed using this procedure can be found in Figure~\ref{fig:nads_examples}.

\begin{figure}[ht]
    \centering
    \includegraphics[width=\textwidth]{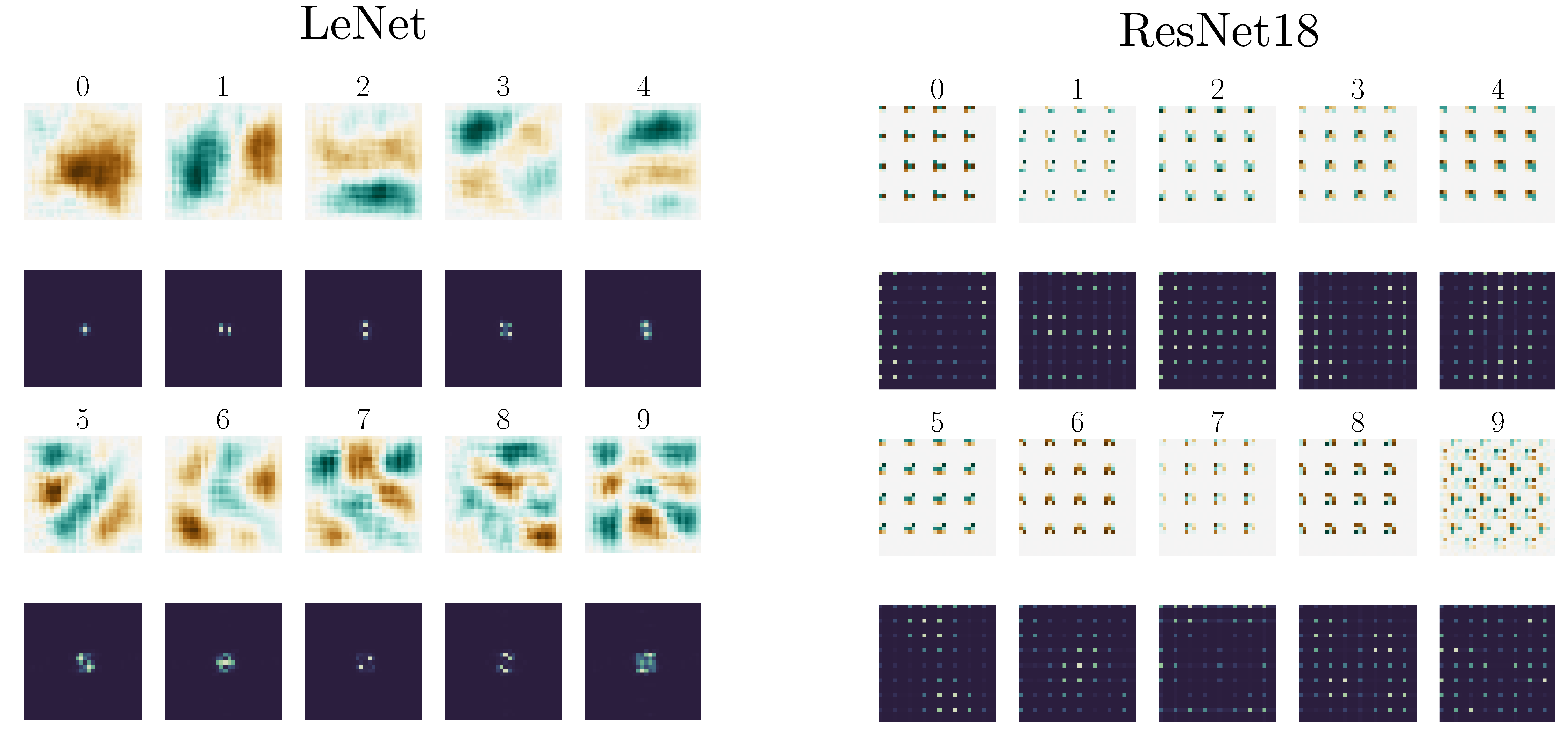}
    \caption{First 10 NADs computed using Theorem~\ref{thm:nads} of a randomly initialized LeNet and a ResNet18. As in \cite{ortizjimenez2020neural}, we show both the spatial domain (light images) and the magnitude of the Fourier domain (dark images) for each NAD. }
    \label{fig:nads_examples}
\end{figure}

\clearpage
\section{Differed proofs}

\subsection{Proof of Lemma~1}

We give here the proof of Lemma~\ref{lemma:alignment} which gives a tractable bound on the kernel norm of a function $f\in\mathcal{H}$ based on its alignment with the kernel. We restate the theorem to ease readability.

\begin{lemma*}
Let $\alpha(f)=\E_{\x,\x'\sim\mathcal{D}}\left[f(\x)\bm{\Theta}(\x, \x') f(\x')\right]$ denote the alignment of the target $f\in\mathcal{H}$ with the kernel $\bm{\Theta}$. Then $\|f\|_{\bm{\Theta}}^2 \geq \|f\|^4_2/\alpha(f)$. Moreover, for the NTK, $\alpha(f)=\left\|\mathbb{E}_\x\left[f(\x)\nabla_\th f_{\th_0}(\x)\right]\right\|_2^2$.
\end{lemma*}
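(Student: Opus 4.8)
The plan is to reduce both claims to elementary manipulations of the Mercer decomposition $\bm{\Theta}(\x,\x')=\sum_{j=1}^\infty \lambda_j \phi_j(\x)\phi_j(\x')$ together with a single application of Cauchy--Schwarz. First I would introduce the projection coefficients $c_j=\E_{\x\sim\mathcal{D}}[\phi_j(\x)f(\x)]$, which are the expansion coefficients of $f$ in the $L_2(\mathcal{D})$-orthonormal eigenbasis $\{\phi_j\}$. With these, the three relevant quantities all take diagonal form: orthonormality of the $\phi_j$ gives $\|f\|_2^2=\sum_j c_j^2$; substituting the Mercer series into the definition of $\alpha$ and using independence of $\x$ and $\x'$ collapses the double expectation to $\alpha(f)=\sum_j \lambda_j c_j^2$; and the RKHS norm is $\|f\|^2_{\bm{\Theta}}=\sum_j c_j^2/\lambda_j$ directly from \eqref{eq:rkhs_norm}.

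Once everything is written in the eigenbasis, the inequality $\|f\|^2_{\bm{\Theta}}\geq \|f\|_2^4/\alpha(f)$ becomes $\big(\sum_j c_j^2/\lambda_j\big)\big(\sum_j \lambda_j c_j^2\big)\geq \big(\sum_j c_j^2\big)^2$, which I would obtain from Cauchy--Schwarz applied to the sequences $a_j=c_j/\sqrt{\lambda_j}$ and $b_j=c_j\sqrt{\lambda_j}$. Indeed $\sum_j a_j^2=\|f\|^2_{\bm{\Theta}}$, $\sum_j b_j^2=\alpha(f)$, and $\sum_j a_j b_j=\|f\|_2^2$, so $(\sum_j a_j b_j)^2\leq(\sum_j a_j^2)(\sum_j b_j^2)$ is exactly the claim after rearranging.

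For the second statement I would specialize to the NTK, where $\bm{\Theta}(\x,\x')=\langle \nabla_\th f_{\th_0}(\x),\nabla_\th f_{\th_0}(\x')\rangle$. Plugging this into the definition of $\alpha$, bilinearity of the inner product lets me absorb the scalar factors $f(\x)$ and $f(\x')$ into their respective gradients, and because the expectations over $\x$ and $\x'$ are independent they factor through the inner product, giving $\alpha(f)=\langle \E_\x[f(\x)\nabla_\th f_{\th_0}(\x)],\,\E_{\x'}[f(\x')\nabla_\th f_{\th_0}(\x')]\rangle=\|\E_\x[f(\x)\nabla_\th f_{\th_0}(\x)]\|_2^2$, as required.

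The computation is routine, so the only genuine obstacle is analytic housekeeping: justifying the interchange of expectation with the infinite Mercer sum, and ensuring that the series defining $\alpha(f)$ and $\|f\|^2_{\bm{\Theta}}$ converge so that the term-by-term Cauchy--Schwarz step is legitimate. Since $f\in\mathcal{H}$ is assumed (whence $\|f\|^2_{\bm{\Theta}}<\infty$) and the kernel is trace-class (so that $\alpha(f)\leq \E_\x[\bm{\Theta}(\x,\x)]\,\|f\|_2^2<\infty$ using $\sum_j\lambda_j=\E_\x[\bm{\Theta}(\x,\x)]$), both sums are finite and the interchanges follow from dominated convergence; I would note this briefly rather than belabor it.
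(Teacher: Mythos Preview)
Your argument is correct and follows essentially the same route as the paper's proof: both expand in the Mercer eigenbasis, apply Cauchy--Schwarz to the sequences $c_j/\sqrt{\lambda_j}$ and $c_j\sqrt{\lambda_j}$, and then factor the double expectation for the NTK case. Your version is slightly more explicit about naming the sequences and about the convergence/interchange justifications, but the substance is identical.
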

\begin{proof}
Given the Mercer's decomposition of $\bm{\Theta}$, i.e., $\bm{\Theta}(\x,\x')=\sum_{j=1}^\infty \lambda_j \phi_j(\x)\phi_j(\x')$, the alignment of $f$ with $\bm{\Theta}$ can also be written
\begin{equation}
    \alpha(f)=\E_{\x,\x'\sim\mathcal{D}}\left[\sum_{j=1}^\infty \lambda_j \phi_j(\x)\phi_j(\x') f(\x)f(\x')\right]=\sum_{j=1}^\infty \lambda_j\left(\E_{\x\sim\mathcal{D}}\left[ \phi_j(\x) f(\x)\right]\right)^2.
\end{equation}

Now, recall that for a positive definite kernel, the kernel norm of a function admits the expression
\begin{equation}
    \|f\|^2_{\bm{\Theta}}=\sum_{j=1}^\infty \cfrac{1}{\lambda_j}\left(\E_{\x\sim\mathcal{D}}\left[ \phi_j(\x) f(\x)\right]\right)^2.
\end{equation}

The two quantities can be related using Cauchy-Schwarz to obtain
\begin{align}
\sqrt{\sum_{j=1}^\infty \,\cfrac{1}{\lambda_j}\left(\E_{\x\sim\mathcal{D}}\left[\phi_j(\x) f(\x)\right]\right)^2}
\sqrt{\sum_{j=1}^\infty \,\lambda_j \left(\E_{\x\sim\mathcal{D}}\left[\phi_j(\x) f(\x)\right]\right)^2}&\geq \sum_{j=1}^\infty\left(\E_{\x\sim\mathcal{D}}\left[\phi_j(\x) f(\x)\right]\right)^2\\
\|f\|_{\bm{\Theta}} \sqrt{\alpha(f)}&\geq \|f\|_2^2.
\end{align}

On the other hand, in the case of the NTK
\begin{align}
    \alpha(f)&=\E_{\x,\x'\sim\mathcal{D}}\left[f(\x)f(\x') \nabla^\top_\th f_{\th_0}(\x) \nabla_\th f_{\th_0}(\x')\right]\nonumber\\
    &=\E_{\x\sim\mathcal{D}}\left[f(\x)\nabla^\top_\th f_{\th_0}(\x)\right] \E_{\x'\sim\mathcal{D}}\left[f(\x')\nabla_\th f_{\th_0}(\x')\right]=\left\|\mathbb{E}_\x\left[f(\x)\nabla_\th f_{\th_0}(\x)\right]\right\|_2^2.
\end{align}

\end{proof}

\subsection{Proof of Theorem~\ref{thm:nads}}

We give here the proof of Theorem~\ref{thm:nads} which gives a closed form expression of the alignment of a linear predictor with the NTK. We restate the theorem to ease readability.
\begin{theorem*}
Let $\bm{u}\in\mathbb{S}^{d-1}$ be a unitary vector that parameterizes a linear predictor $g_{\bm{u}}(\x)=\bm{u}^\top\x$, and let $\x\sim\mathcal{N}(\bm{0}, \bm{I})$. The alignment of $g_{\bm{u}}$ with $\bm{\Theta}$ is given by 
\begin{equation}
    \alpha(g_{\bm{u}})=\left\|\mathbb{E}_{\x}\left[ \nabla^2_{\x,\bm{\theta}}f_{\bm{\theta}_0}(\x)\right]\bm{u}\right\|_2^2,
\end{equation}
where $\nabla^2_{\x,\bm{\theta}}f_{\bm{\theta}_0}(\x)\in\R^{n\times d}$ denotes the derivative of $f_\th$ with respect to the weights and the input.
\end{theorem*}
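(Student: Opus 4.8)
The plan is to start from the closed-form expression for the NTK alignment already established in Lemma~\ref{lemma:alignment}, and then convert the input-weighted expectation it produces into a mixed-derivative expression by invoking Gaussian integration by parts (Stein's identity). Specializing the second identity of Lemma~\ref{lemma:alignment} to the linear target $f=g_{\bm{u}}$ with $g_{\bm{u}}(\x)=\bm{u}^\top\x$ gives
\[
\alpha(g_{\bm{u}})=\left\|\mathbb{E}_\x\left[(\bm{u}^\top\x)\,\nabla_\th f_{\th_0}(\x)\right]\right\|_2^2,
\]
so the entire task reduces to showing that the vector inside the norm equals $\mathbb{E}_\x\!\left[\nabla^2_{\x,\th}f_{\th_0}(\x)\right]\bm{u}$. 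Note that this is a purely computational identity, so the fact that $g_{\bm{u}}$ need not lie in $\mathcal{H}$ is harmless.

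First I would argue componentwise. Fixing a weight index $k$, let $h_k(\x)=[\nabla_\th f_{\th_0}(\x)]_k$ be the corresponding scalar coordinate of the weight-Jacobian. Because $\x\sim\mathcal{N}(\bm{0},\bm{I})$, Stein's identity yields $\mathbb{E}_\x[x_j\,h_k(\x)]=\mathbb{E}_\x[\partial h_k/\partial x_j]$ for every coordinate $j$; contracting against $\bm{u}$ gives $\mathbb{E}_\x[(\bm{u}^\top\x)\,h_k(\x)]=\mathbb{E}_\x[\nabla_\x h_k(\x)]^\top\bm{u}$. The key observation is that $\nabla_\x h_k(\x)$ is exactly the $k$-th row of the mixed second derivative $\nabla^2_{\x,\th}f_{\th_0}(\x)\in\R^{n\times d}$, whose $(k,j)$ entry is $\partial^2 f_{\th_0}/\partial\theta_k\,\partial x_j$. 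Hence $\mathbb{E}_\x[(\bm{u}^\top\x)\,h_k(\x)]$ is precisely the $k$-th entry of $\mathbb{E}_\x[\nabla^2_{\x,\th}f_{\th_0}(\x)]\,\bm{u}$. Stacking over all $k$ establishes $\mathbb{E}_\x[(\bm{u}^\top\x)\,\nabla_\th f_{\th_0}(\x)]=\mathbb{E}_\x[\nabla^2_{\x,\th}f_{\th_0}(\x)]\,\bm{u}$, and substituting back and taking the squared norm closes the argument.

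The main obstacle is regularity rather than algebra: Stein's identity requires each $h_k$ to be (weakly) differentiable and that $h_k$ and its gradient be suitably integrable against the Gaussian measure, so that the boundary terms of the integration by parts vanish. This is also where twice-differentiability of $f_\th$ in the input enters, since the mixed derivative $\nabla^2_{\x,\th}f_{\th_0}$ must exist for the statement to even make sense; for piecewise-linear activations such as ReLU this fails, and one must either smooth the activation (as is done with GeLU in the appendix) or interpret the derivatives distributionally. I would therefore state the differentiability and integrability assumptions explicitly at the outset and then treat the componentwise application of Stein's identity as routine.
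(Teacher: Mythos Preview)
Your proposal is correct and follows essentially the same route as the paper: start from the NTK alignment identity of Lemma~\ref{lemma:alignment}, specialize to $f=g_{\bm{u}}$, and apply Stein's lemma to convert $\mathbb{E}_\x[(\bm{u}^\top\x)\nabla_\th f_{\th_0}(\x)]$ into $\mathbb{E}_\x[\nabla^2_{\x,\th}f_{\th_0}(\x)]\bm{u}$. The paper's proof is a two-line sketch that simply cites Stein's lemma, whereas you make the componentwise argument explicit and flag the regularity assumptions (weak differentiability, integrability, and the ReLU/GeLU issue); that extra care is a strict improvement in rigor but not a different idea.
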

\begin{proof}

Plugging the definition of a linear predictor on the expression of the alignment for the NTK (see Lemma~\ref{lemma:alignment}) we get
\begin{equation}
    \alpha(f)=\left\|\mathbb{E}_\x\left[\nabla_\th f_{\th_0}(\x) \bm{x}^\top\bm{u}\right]\right\|_2^2,
\end{equation}
which using Stein's lemma becomes
\begin{equation}
    \alpha(f)=\left\|\mathbb{E}_\x\left[\nabla^2_{\th,\x} f_{\th_0}(\x)\bm{u}\right]\right\|_2^2.
\end{equation}

\end{proof}

\section{Additional results}
We now provide the results of some experiments which complement the findings in the main text.

\subsection{Learning NTK eigenfunctions}

In Section~3.2, we saw that neural networks and their linear approximations share the way in which they rank the complexity of learning different NTK eigenfunctions. However, this experiments were performed using a single training setup, and a single data distribution. For this reason, we now provide two complementary sets of experiments which highlight the generality of the previous results.

On the one hand, we repeated the experiment in Section~3.2. using a different training strategy, replacing SGD with the popular Adam~\cite{adam} optimization algorithm. As we can see in Figure~\ref{fig:adam} the main findings of Section~3.2. also transfer to this different training setup. In particular, we see that the performance of all models progressively decays with increasing eigenfunction index, and that the linearized models have a clear \emph{linear advantage} over the non-linear neural networks.
\begin{figure}[ht]
    \centering
    \includegraphics[width=\textwidth]{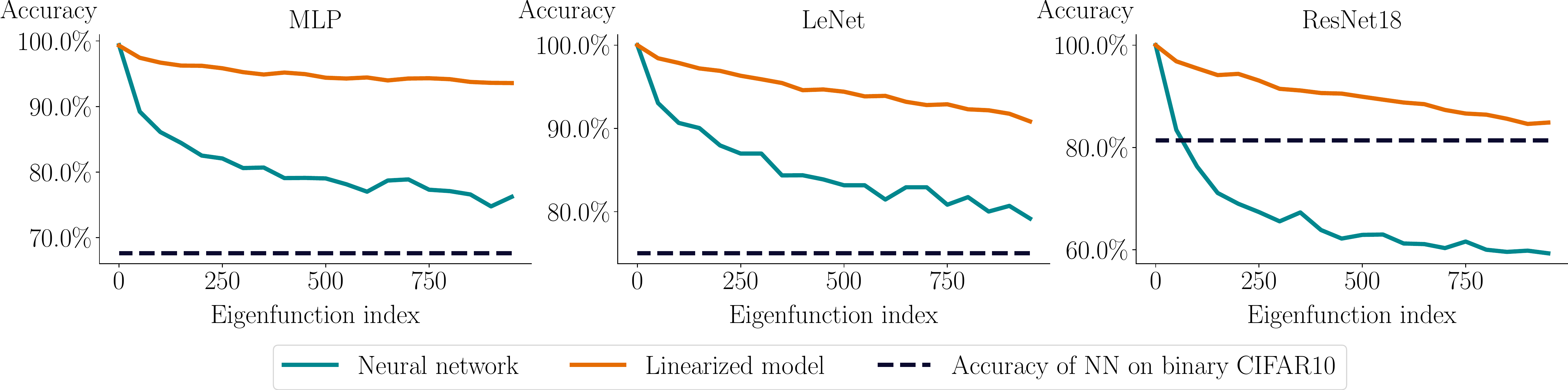}
    \caption{Validation accuracy of different neural network architectures and their linearizations when trained on binarized eigenfunctions of the NTK at initialization, i.e., $\x\mapsto \sign(\phi_j(\x))$ using Adam. As a baseline, we also provide the accuracies on CIFAR2.}
    \label{fig:adam}
\end{figure}

On the other hand, we also repeated the same experiments changing the underlying data distribution, and instead of using the CIFAR10 samples, we used the MNIST~\cite{lecunGradientbasedLearningApplied1998} digits. The results in Figure~\ref{fig:mnsit} show again the same tendency.\footnote{Note that the MNIST dataset has more samples than CIFAR10 (i.e., $70,000$ samples), and hence, due to the quadratic complexity of the Gram matrix computation and budgetary reasons, we decided to not perform this experiment on ResNet18.} However, we now see, that for the LeNet, the accuracy curves of the neural network and the linearized model cross around $\phi_{500}$, highlighting that the existence of a \emph{linear} or \emph{non-linear} advantage greatly depends on the target task.
\begin{figure}[ht]
    \centering
    \includegraphics[width=0.8\textwidth]{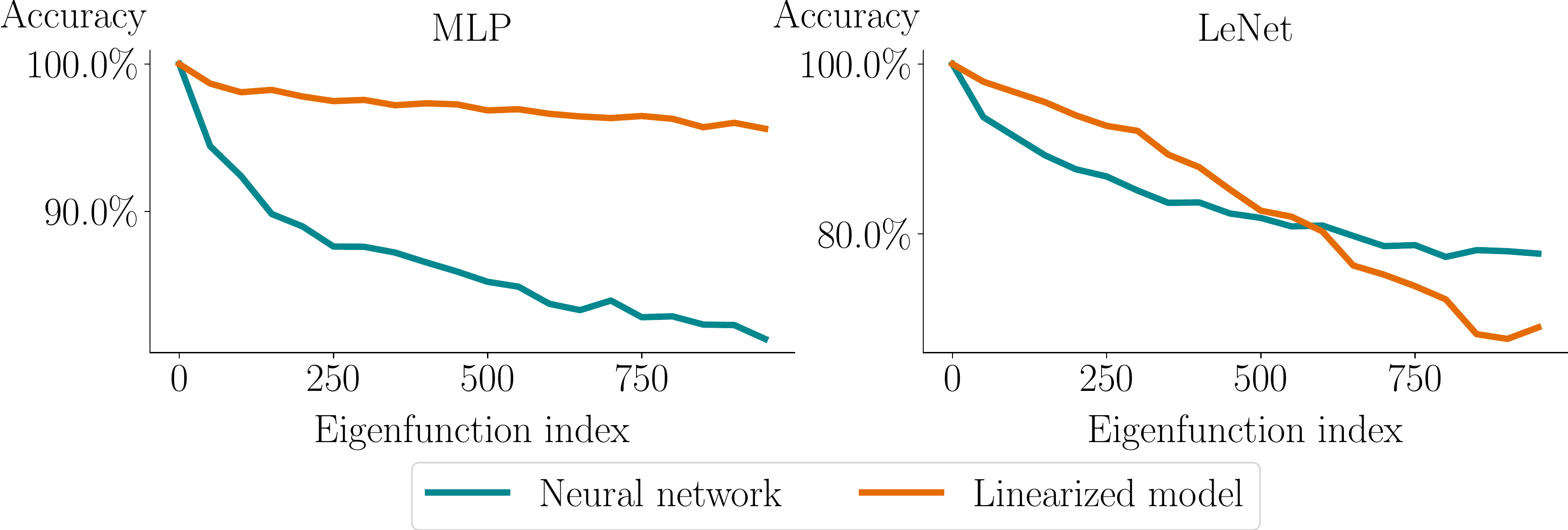}
    \caption{Validation accuracy of different neural network architectures and their linearizations when trained on binarized eigenfunctions of the NTK at initialization, i.e., $\x\mapsto \sign(\phi_j(\x))$ computed over MNIST data. }
    \label{fig:mnsit}
\end{figure}

\clearpage
\subsubsection{Convergence speed of other networks}
We also provide the collection of training metrics of the MLP (see Figure~\ref{fig:speed-mlp}) and the LeNet (see Figure~\ref{fig:speed-lenet}) trained on the different eigenfunctions of $\bm{\Theta}_0$. Again, as was the case for the ResNet18, we see that training is ``harder'' for the eigenfunctions corresponding to the smaller eigenvalues, as the time to reach a low training loss, and the distance to the weight initialization grow with eigenfunction index.
\begin{figure}[ht]
    \centering
    \includegraphics[width=\textwidth]{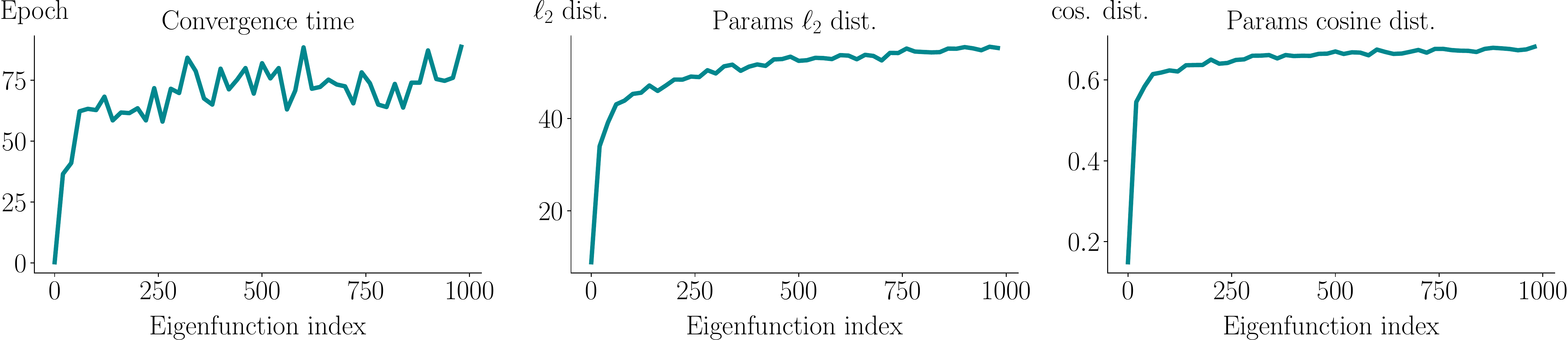}
    \caption{Correlation of different training metrics with the index of the eigenfunction the network is trained on. Plots show the number of training iterations taken by the network to achieve a $0.01$ training loss, and the $\ell_2$ and cosine distances between initialization and final parameters for a MLP trained on the binarized eigenfunctions of the NTK at initialization.}
    \label{fig:speed-mlp}
\end{figure}
\begin{figure}[ht]
    \centering
    \includegraphics[width=\textwidth]{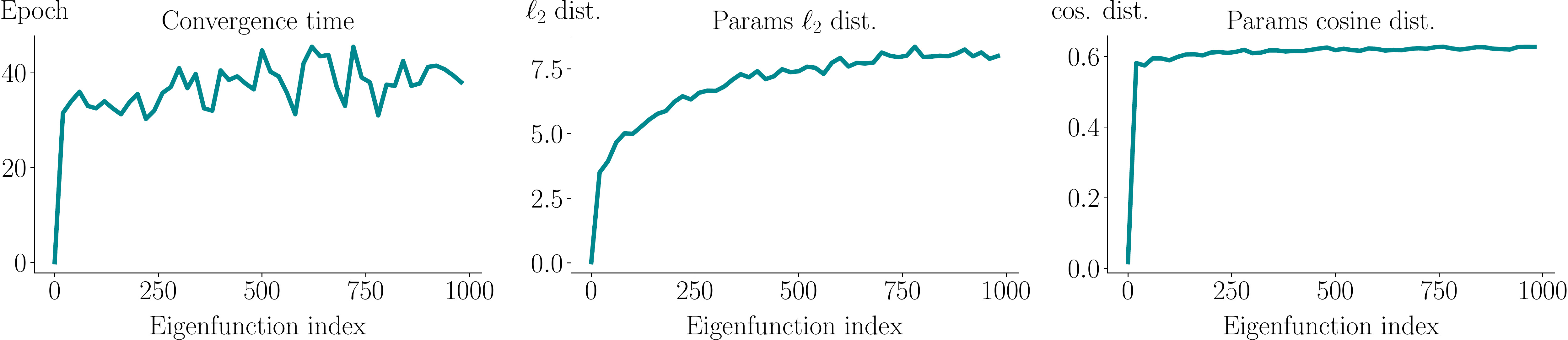}
    \caption{Correlation of different training metrics with the index of the eigenfunction the network is trained on. Plots show the number of training iterations taken by the network to achieve a $0.01$ training loss, and the $\ell_2$ and cosine distances between initialization and final parameters for a LeNet trained on the binarized eigenfunctions of the NTK at initialization.}
    \label{fig:speed-lenet}
\end{figure}

\subsection{Energy concentration of CIFAR2 labels}
We have seen that training a network on a task greatly increases the energy concentration of the training labels with the final kernel. In the main text, however, we only provided the results for the evolution of the energy concentration, i.e., $\|\bm{\Phi}_t\bm{y}\|_2/\|\bm{y}\|_2$, for the LeNet trained on CIFAR2. Table~\ref{tab:energy-cifar-Appendix} shows the complete results for the other networks, clearly showing an increase in the energy concentration at the end of training.
\begin{table}[ht]
\centering
\caption{Energy concentration on top $K=50$ eigenvectors of the NTK Gram matrices at initialization $\bm{\Theta}_0$ and in the last epoch of training $\bm{\Theta}_{200}$ computed on $12,000$ training samples. We also provide the test accuracy on CIFAR2 for comparison.}\label{tab:energy-cifar-Appendix}
\begin{tabular}{@{}rccc@{}}
\toprule
\multicolumn{1}{l}{} & MLP    & LeNet  & ResNet18 \\ \midrule
Energy conc. (init)  & 26.0\%  & 25.8\%  & 22.7\%    \\
Energy conc. (end)   & 63.6\% & 83.0\% & 96.7\%  \\ 
Test accuracy       & 67.6\% & 75.0\%   & 81.4\%   \\ \bottomrule
\end{tabular}
\end{table}

\subsection{Increase of alignment when learning NTK eigenfunction}
Finally, we provide the final alignment plots of the networks trained to predict different eigenfunctions than the one showed in the main text. As we can see in Figure~\ref{fig:alignment-mlp} and Figure~\ref{fig:alignment-lenet} the relative alignment of the training eigenfunction spikes at the end of training, demonstrating the single-axis rotation of the empirical NTK during training.
\begin{figure}[ht]
    \centering
    \includegraphics[width=\textwidth]{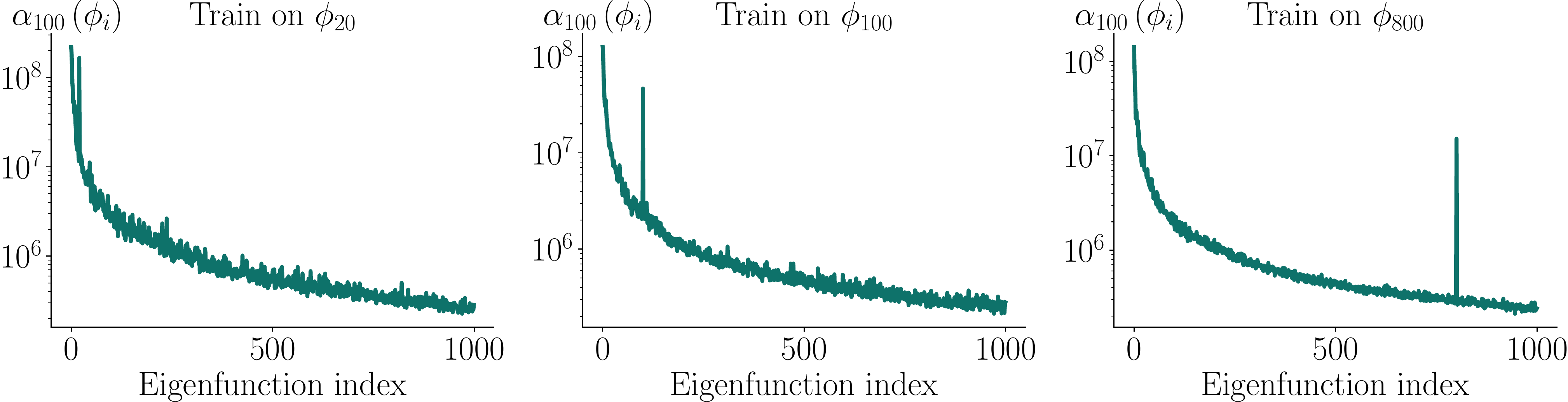}
    \caption{Alignment of the eigenfunctions of the NTK of a randomly initialized MLP with the NTK computed after training to predict different initialization eigenfunctions.}
    \label{fig:alignment-mlp}
\end{figure}

\begin{figure}[ht]
    \centering
    \includegraphics[width=\textwidth]{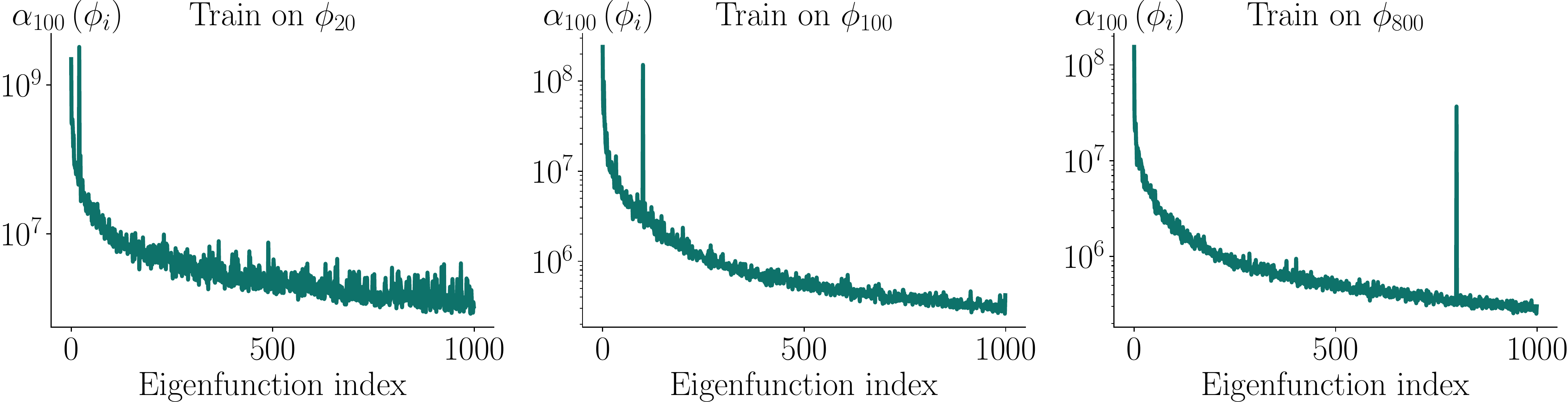}
    \caption{Alignment of the eigenfunctions of the NTK of a randomly initialized LeNet with the NTK computed after training to predict different initialization eigenfunctions.}
    \label{fig:alignment-lenet}
\end{figure}

\end{document}